\theoremstyle{plain}
\newtheorem{theorem}{Theorem}[section]
\newtheorem{proposition}[theorem]{Proposition}
\theoremstyle{definition}
\newtheorem{definition}[theorem]{Definition}
\theoremstyle{remark}
\DeclareMathOperator{\sech}{sech}
\newcommand{\nin}{\ensuremath{n_{\text{in}}}}
\newcommand{\lth}{\ensuremath{l^{\text{th}}}}
\newcommand{\vol}{\ensuremath{\text{vol}}}
\newcommand{\distfun}{\ensuremath{\text{distance}}}
\newcommand{\calB}{\ensuremath{\mathcal B}}
\newcommand{\defeq}{\vcentcolon=}
\DeclareMathAlphabet{\mathcal}{OMS}{cmsy}{m}{n}
\title{Effects of Data Geometry in Early Deep Learning}
\author{%
  Saket Tiwari \\
  Department of Computer Science\\
  Brown University\\
  Providence, RI 02906 \\
  \texttt{saket\_tiwari@brown.edu} \\
  % examples of more authors
  \And
  George Konidaris \\
  Department of Computer Science\\
  Brown University \\
  Providence, RI 02906 \\
  % \texttt{email} \\
  % \AND
  % Coauthor \\
  % Affiliation \\
  % Address \\
  % \texttt{email} \\
  % \And
  % Coauthor \\
  % Affiliation \\
  % Address \\
  % \texttt{email} \\
  % \And
  % Coauthor \\
  % Affiliation \\
  % Address \\
  % \texttt{email} \\
}
\begin{document}

\maketitle

\begin{abstract}
  Deep neural networks can approximate functions on different types of data, from images to graphs, with varied underlying structure. 
 This underlying structure can be viewed as the geometry of the data manifold. 
 By extending recent advances in the theoretical understanding of neural networks, we study how a randomly initialized neural network with piece-wise linear activation splits the data manifold into \textit{regions} where the neural network behaves as a linear function. 
 We derive bounds on the density of boundary of linear regions and the distance to these boundaries on the data manifold.
 This leads to insights into the expressivity of randomly initialized deep neural networks on non-Euclidean data sets. 
 We empirically corroborate our theoretical results using a toy supervised learning problem.
 Our experiments demonstrate that number of linear regions varies across manifolds and the results hold with changing neural network architectures.
We further demonstrate how the complexity of linear regions is different on the low dimensional manifold of images as compared to the Euclidean space, using the MetFaces dataset.
\end{abstract}

\section{Introduction}\label{sec:intro}
The capacity of Deep Neural Networks (DNNs) to approximate arbitrary functions given sufficient training data in the supervised learning setting is well known \citep{Cybenko1989ApproximationBS, Hornik1989MultilayerFN, Anthony1999NeuralNL}.
Several different theoretical approaches have emerged that study the effectiveness and pitfalls of deep learning.
These studies vary in their treatment of neural networks and the aspects they study range from convergence \citep{AllenZhu2019ACT, Goodfellow2015QualitativelyCN}, generalization \citep{Kawaguchi2017GeneralizationID, Zhang2017UnderstandingDL, Jacot2018NeuralTK, Sagun2018EmpiricalAO}, function complexity \citep{Montfar2014OnTN, Mhaskar2016DeepVS}, adversarial attacks \citep{Szegedy2014IntriguingPO, Goodfellow2015ExplainingAH} to representation capacity \citep{Arpit2017ACL}.
Some recent theories have also been shown to closely match empirical observations \citep{Poole2016ExponentialEI, Hanin2019DeepRN, Kunin2020NeuralMS}.

One approach to studying DNNs is to examine how the underlying structure, or geometry, of the data interacts with learning dynamics.
The manifold hypothesis states that high-dimensional real world data typically lies on a low dimensional manifold \citep{Tenenbaum1997MappingAM, Carlsson2007OnTL, Fefferman2013TestingTM}.
Empirical studies have shown that DNNs are highly effective in deciphering this underlying structure by learning intermediate latent representations \citep{Poole2016ExponentialEI}.
The ability of DNNs to ``flatten'' complex data manifolds, using composition of seemingly simple piece-wise linear functions, appears to be unique \citep{Brahma2016WhyDL, Hauser2017PrinciplesOR}.

DNNs with piece-wise linear activations, such as ReLU \citep{Nair2010RectifiedLU}, divide the input space into linear regions, wherein the DNN behaves as a linear function \citep{Montfar2014OnTN}.
The density of these linear regions serves as a proxy for the DNN's ability to interpolate a complex data landscape and  has been the subject of detailed studies \citep{Montfar2014OnTN, Telgarsky2015RepresentationBO, Serra2018BoundingAC, Raghu2017OnTE}.
The work by \citet{Hanin2019ComplexityOL} on this topic stands out because they derive bounds on the average number of linear regions and verify the tightness of these bounds empirically for deep ReLU networks, instead of larger bounds that rarely materialize.
\citet{Hanin2019ComplexityOL} conjecture that the number of linear regions correlates to the expressive power of  randomly initialized DNNs with piece-wise linear activations.
However, they assume that the data is uniformly sampled from the Euclidean space $\mathbb R^d$, for some $d$.
By combining the manifold hypothesis with insights from \citet{Hanin2019ComplexityOL}, we are able to go further in estimating the number of linear regions and the average distance from \textit{linear boundaries}.
We derive bounds on how the geometry of the data manifold affects the aforementioned quantities.

To corroborate our theoretical bounds with empirical results, we design a toy problem where the input data is sampled from two distinct manifolds that can be represented in a closed form.
We count the exact number of linear regions and the average distance to the boundaries of linear regions on these two manifolds that a neural network divides the two manifolds into.
We demonstrate how the number of linear regions and average distance varies for these two distinct manifolds.
These results show that the number of linear regions on the manifold do not grow exponentially with the dimension of input data.
Our experiments do not provide estimates for theoretical constants, as in most deep learning theory, but demonstrate that the number of linear regions change as a consequence of these constants.
We also study linear regions of deep ReLU networks for high dimensional data that lies on a low dimensional manifold with unknown structure and how the number of linear regions vary on and off this manifold, which is a more realistic setting.
To achieve this we present experiments performed on the manifold of natural face images.
We sample data from the image manifold using a generative adversarial network (GAN) \citep{Goodfellow2014GenerativeAN} trained on the curated images of paintings.
Specifically, we generate images using the pre-trained StyleGAN \citep{Karras2019ASG, Karras2020AnalyzingAI} trained on the curated MetFaces dataset \citep{Karras2020TrainingGA}.
We generate \textit{curves} on the image manifold of faces, using StyleGAN, and report how the density of linear regions varies on and off the manifold.
These results shed new light on the geometry of deep learning over structured data sets by taking a data intrinsic approach to understanding the expressive power of DNNs.

\section{Preliminaries and Background}

Our goal is to understand how the underlying structure of real world data matters for deep learning.
We first provide the mathematical background required to model this underlying structure as the geometry of data.
We then provide a summary of previous work on understanding the approximation capacity of deep ReLU networks via the complexity of linear regions.
For the details on how our work fits into one of the two main approaches within the theory of DNNs, from the  expressive power perspective or from the learning dynamics perspective, we refer the reader to Appendix \ref{app:relwork}.

\subsection{Data Manifold and Definitions}

\begin{figure}[!!ht]
    \centering
    \includegraphics[width=.3\textwidth]{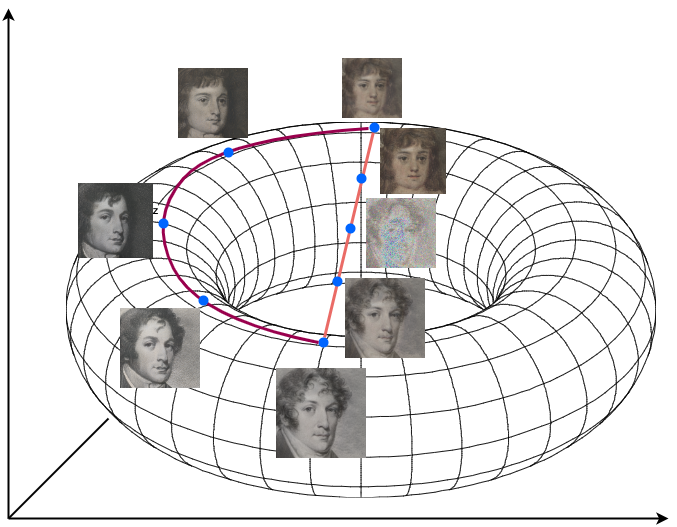}
    \caption{A 2D surface, here represented by a 2-torus, is embedded in a larger input space, $\mathbb R^3$. Suppose each point corresponds to an image of a face on this 2-torus. We can chart two curves: one straight line cutting across the 3D space and another curve that stays on the torus. Images corresponding to the points on the torus will have a smoother variation in style and shape whereas there will be images corresponding to points on the straight line that are not faces.}
    \label{fig:faces_manifold}
    \vspace{-10pt}
\end{figure}

We use the example of the MetFaces dataset \citep{Karras2020TrainingGA} to illustrate how data lies on a low dimensional manifold.
The images in the dataset are $1028 \times 1028 \times 3$ dimensional.
By contrast, the number of \textit{realistic} dimensions along which they vary are limited, e.g. painting style, artist, size and shape of the nose, jaw and eyes, background, clothing style; in fact, very few $1028 \times 1028 \times 3$ dimensional images correspond to realistic faces. 
We illustrate how this affects the possible variations in the data in Figure \ref{fig:faces_manifold}.
A manifold  formalises the notion of limited variations in high dimensional data.
One can imagine that there exists an unknown function $f: X \to Y$ from a low dimensional space of variations, to a high dimensional space of the actual data points.
Such a function $f: X \to Y$, from one open subset $X \subset \mathbb R^{m}$, to another open subset $Y \subset R^{k}$, is a \textit{diffeomorphism} if $f$ is bijective, and both $f$ and $f^{-1}$ are differentiable (or smooth). Therefore, a manifold is defined as follows.
\begin{definition}\label{def1}
\textit{
Let $k, m \in \mathbb N_0$. A subset $M \subset \mathbb R^k$ is called a smooth $m$-dimensional submanifold of $\mathbb R^k$ (or \textit{$m$-manifold in $\mathbb R^k$}) iff every point $x \in M$ has an open neighborhood $U \subset \mathbb R^k$ such that $U \cap M$ is diffeomorphic to an open subset $\Omega \subset \mathbb R^m$. 
A diffeomorphism (i.e. differentiable mapping),
\begin{equation*}
f: U \cap M \to \Omega
\end{equation*}
is called a coordinate chart of M and the inverse,
\begin{equation*}
h := f ^{-1}: \Omega \to U \cap M
\end{equation*}
is called a smooth parametrization of $U \cap M$.}
\end{definition}

For the MetFaces dataset example, suppose there are 10 dimensions along which the images vary.
Further assume that each variation can take a value continuously in some interval of $\mathbb R$.
Then the smooth parametrization would map $f: \Omega \cap \mathbb R^{10} \to M \cap \mathbb R^{1028 \times 1028 \times 3}$.
This parametrization and its inverse are unknown in general and computationally very difficult to estimate in practice.

There are similarities in how geometric elements are defined for manifolds and Euclidean spaces.
A smooth curve, on a manifold $M$, $\gamma: I \to M$ is defined from an interval $I$ to the manifold $M$ as a function that is differentiable for all $t \in I$, just as for Euclidean spaces.
The shortest such curve between two points on a manifold is no longer a straight line, but is instead  a \textit{geodesic}.
One recurring geometric element, which is unique to manifolds and stems from the definition of smooth curves, is that of a \textit{tangent space}, defined as follows.
\begin{definition}\textit{
Let $M$ be an $m$-manifold in $\mathbb R^k$ and $x \in M$ be a fixed point. 
A vector $v \in \mathbb R^k$ is called a tangent vector of $M$ at $x$ if there exists a smooth curve $\gamma: I \to M$ such that $ \gamma(0) = x, \dot{\gamma}(0) = v$ where $\dot{\gamma}(t)$ is the derivative of $\gamma$ at $t$.
The set
\begin{equation*}
T_x M := \{ \dot{\gamma}(0) | \gamma: \mathbb R \to M \text{ is smooth} \gamma(0) = x \}
\end{equation*}
of tangent vectors of $M$ at $x$ is called the tangent space of $M$ at $x$.}
\end{definition}

In simpler terms, the plane tangent to the manifold $M$ at point $x$ is called the tangent space and denoted by by $T_x M$.
Consider the upper half of a 2-sphere, $S^2 \subset \mathbb R^3$, which is a 2-manifold in $\mathbb R^3$.
The tangent space at a fixed point $x \in S^2$ is the 2D plane perpendicular to the vector $x$ and tangential to the surface of the sphere that contains the point $x$.
For additional background on manifolds we refer the reader to Appendix \ref{app:background}.

\subsection{Linear Regions of Deep ReLU Networks}

The higher the density of these linear regions the more complex a function a DNN can approximate.
For example, a $\sin$ curve in the range $[0,2\pi]$ is better approximated by 4 piece-wise linear regions as opposed to 2.
To clarify this further, with the 4 ``optimal'' linear regions $[0, \pi/2), [\pi/2, \pi), [\pi, 3\pi/2),$ and $[3\pi/2, 2\pi]$ a function could approximate the $\sin$ curve better than any 2 linear regions.
In other words, higher density of linear regions allows a DNN to approximate the variation in the curve better.
We define the notion of boundary of a linear regions in this section and provide an overview of previous results.

We consider a neural network, $F$, which is a composition of activation functions. 
Inputs at each layer are multiplied by a matrix, referred to as the weight matrix, with an additional bias vector that is added to this product.
We limit our study to ReLU activation function \citep{Nair2010RectifiedLU}, which is piece-wise linear and one of the most popular activation functions being applied to various learning tasks on different types of data like text, images, signals etc. We further consider DNNs that map inputs, of dimension $\nin$, to scalar values.
Therefore, $F: \mathbb R^{\nin} \to \mathbb R$ is defined as,
\begin{equation} \label{eq:dnntemplate}
    F(x) = W_L \sigma(B_{L - 1} + W_{L - 1} \sigma( ... \sigma(B_{1} + W_{1}x))),
\end{equation}
where $W_{l} \in \mathbb M^{n_l \times n_{l - 1}}$ is the weight matrix for the $\lth$ hidden layer, $n_l$ is the number of neurons in the $\lth$ hidden layer, $B_l \in \mathbb R^{n_l}$ is the vector of biases for the $\lth$ hidden layer, $n_0 = \nin$ and $\sigma: \mathbb R \to \mathbb R$ is the activation function.
For a neuron $z$ in the $\lth$ layer we denote the \textit{pre-activation} of this neuron, for given input $x \in \mathbb R^{\nin}$, as $z_{l}(x)$. 
For a neuron $z$ in the layer $l$ we have
\begin{equation}
\label{eq:neuroactive}
z(x) =  W_{l - 1, z} \sigma( ... \sigma(B_{1} + W_{1}x)),
\end{equation}
for $l > 1$ (for the base case $l = 1$ we have $z(x) = W_{1, z}x$) where $W_{l - 1, z}$ is the row of weights, in the weight matrix of the $\lth$ layer, $W_l$, corresponding to the neuron $z$. 
We use $W_z$ to denote the weight vector for brevity, omitting the layer index $l$ in the subscript.
We also use $b_z$ to denote the bias term for the neuron $z$.

Neural networks with piece-wise linear activations are piece-wise linear on the input space \citep{Montfar2014OnTN}.
Suppose for some fixed $y \in \mathbb R^{\nin}$ as $x \to y$ if we have $z(x) \to -b_z$ then we observe a discontinuity in the gradient $\nabla_x \sigma(b_z + W_z z(x))$ at $y$.
Intuitively, this is because $x$ is approaching the boundary of the linear region of the function defined by the output of $z$.
Therefore, the boundary of linear regions, for a feed forward neural network $F$, is defined as:
\begin{equation*}
\mathcal B_F = \{x | \nabla F(x) \text{ is not continuous at } x\}.
\end{equation*}

\citet{Hanin2019ComplexityOL} argue that an important generalization for the approximation capacity of a neural network $F$ is the $(\nin - 1)-$dimensional volume density of linear regions defined as $ \vol_{\nin - 1}(\mathcal B_F \cap K)/\vol_{\nin}(K),$ for a bounded set $K \subset \mathbb R^{\nin}$.
This quantity serves as a proxy for density of linear regions and therefore the expressive capacity of DNNs.
Intuitively, higher density of linear boundaries means higher capacity of the DNN to approximate complex non-linear functions.
The quantity is applied to lower bound the distance between a point $x \in K$ and the set $\mathcal B_F$, which is
\begin{equation*}
    \distfun(x, \mathcal B_F) = \min_{\text{neurons } z} |z(x) - b_z|/||\nabla z(x)||,
\end{equation*}
which measures the sensitivity over neurons at a given input.
The above quantity measures how ``far'' the input is from flipping any neuron from inactive to active or vice-versa.

Informally, \citet{Hanin2019ComplexityOL} provide two main results for a randomly initialized DNN $F$, with a reasonable initialisation. Firstly, they show that
\begin{equation*}
    \mathbb E \Big [ \frac{\vol_{\nin - 1}(\mathcal B_F \cap K)}{\vol_{\nin}(K)} \Big ] \approx \# \{ \text{ neurons} \},
\end{equation*}
meaning the density of linear regions is bound above and below by some constant times the number of neurons.
Secondly, for $x \in [0, 1]^{\nin}$,
\begin{equation*}
    \mathbb E \Big [ \distfun(x, \mathcal B_F) \Big ] \geq C \# \{ \text{ neurons} \}^{-1},
\end{equation*}
where $C > 0$ depends on the distribution of biases and weights, in addition to other factors. In other words, the distance to the nearest boundary is bounded above and below by a constant times the inverse of the number of neurons.
These results stand in contrast to earlier worst case bounds that are exponential in the number of neurons.
\citet{Hanin2019ComplexityOL} also verify these results empirically to note that the constants lie in the vicinity of 1 throughout training.

\section{Linear Regions on the Data Manifold}

\begin{figure}
    \centering
    \includegraphics[width=.3\textwidth]{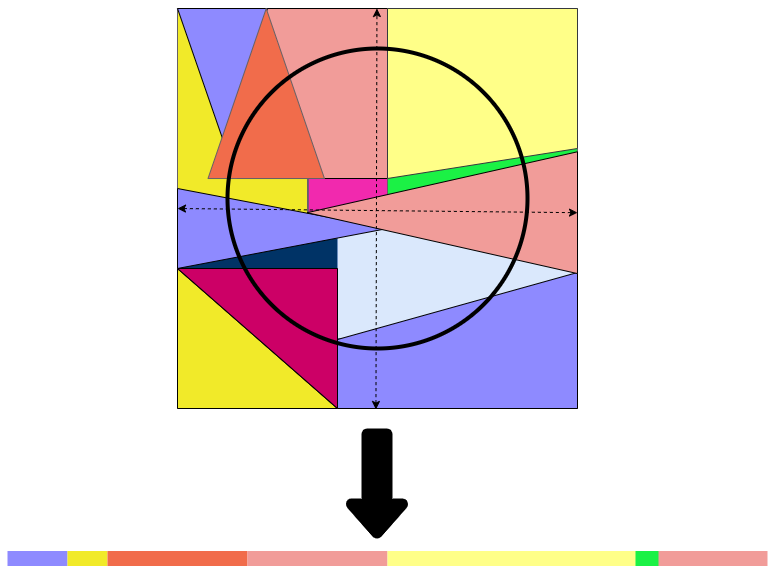}
    \caption{A circle is an example of a 1D manifold in a 2D Euclidean space. The effective number of linear regions on the manifold, the upper half of the circle, are the number of linear regions on the arc from $-\pi$ to $\pi$. In the diagram above, each color in the 2D space corresponds to a linear region. When the upper half of the circle is flattened into a 1D space we obtain a line. Each color on the line corresponds to a linear region of the 2D space.}
    \label{fig:my_label}
    \vspace{-10pt}
\end{figure}

One important assumption in the results presented by \citet{Hanin2019ComplexityOL} is that the input, $x$, lies in a compact set $K \subset \mathbb R^{\nin}$ and that $\vol_{\nin}(K)$ is greater than 0. 
Also, the theorem pertaining to the lower bound on average distance of $x$ to linear boundaries the input assumes the input uniformly distributed in $[0, 1]^{\nin}$.
As noted earlier, high-dimensional real world datasets, like images, lie on low dimensional manifolds, therefore both these assumptions are false in practice.
This motivates us to study the case where the data lies on some $m-$dimensional submanifold of $\mathbb R^{\nin}$, i.e. $M \subset \mathbb R^{\nin}$ where $m \ll \nin$.
We illustrate how this constraint effects the study of linear regions in Figure \ref{fig:my_label}.

As introduced by \citet{Hanin2019ComplexityOL}, we denote the ``$(\nin -k)-$dimensional piece'' of $\mathcal B_F$ as $\mathcal B_{F, k}$.
More precisely, $\calB_{F, 0} = \emptyset$ and $\calB_{F, k}$ is recursively defined to be the set of points $x \in \calB_F \setminus \{\calB_{F, 0} \cup ... \cup \calB_{F, k - 1} \}$ with the added condition that in a neighbourhood of $x$ the set $\calB_{F, k}$ coincides with hyperplane of dimension $\nin - k$.
We provide a detailed and formal definition for $\calB_{F, k}$ with intuition in Appendix \ref{sec:jacobproof}.
In our setting, where the data lies on a manifold $M$, we define $\calB_{F, k}'$ as $\calB_{F, k} \cap M$, and note that $\dim(\calB_{F, k}') = m - k$ (Appendix \ref{sec:jacobproof} Proposition \ref{prop:manifolddims}).
For example, the \textit{transverse} intersection (see Definition \ref{def:trans}) of a plane in 3D with the 2D manifold $S^2$ is a 1D curve in $S^2$ and therefore has dimension $1$.
Therefore, $\calB_{F, k}'$ is a submanifold of dimension $3 - 2 = 1$.
This imposes the restriction $k \leq m$, for the intersection $\calB_{F, k} \cap M$ to have a well defined volume.

We first note that the definition of the determinant of the Jacobian, for a collection of neurons $z_1, ..., z_k$, is different in the case when the data lies on a manifold $M$ as opposed to in a compact set of dimension $\nin$ in $\mathbb R^{\nin}$.
Since the determinant of the Jacobian is the quantity we utilise in our proofs and theorems repeatedly we will use the term Jacobian to refer to it for succinctness.
Intuitively, this follows from the Jacobian of a function being defined differently in the ambient space $\mathbb R^{\nin}$ as opposed to the manifold $M$.
In case of the former it is the volume of the paralellepiped determined by the vectors corresponding to the directions with steepest ascent along each one of the $\nin$ axes.
In case of the latter it is more complex and defined below.
Let $\mathcal H^m$ be the $m-$dimensional Hausdorff measure (we refer the reader to the Appendix \ref{app:background} for background on Hausdorff measure).
The Jacobian of a function on manifold $M$, as defined by \citet{Krantz2008GeometricIT} (Chapter 5), is as follows.
\begin{definition}
\label{def:jacob}
\textit{
The (determinant of) Jacobian of a function $H: M \to \mathbb R^k$, where $k \leq \dim(M) = m$, is defined as
\begin{align*}
    J_{k, H}^M(x) = \sup \Big \{& \frac{\mathcal H^k(D_M H(P))}{\mathcal H^k(P)} \Big | P \text{ is a } k \text{-dimensional parallelepiped } \text{contained in } T_{x}M.\Big \},
\end{align*}
where $D_M: T_xM \to \mathbb R^k$ is the differential map (see Appendix \ref{app:background}) and we use $D_M H(P)$ to denote the mapping of the set $P$ in $T_xM$, which is a parallelepiped, to $\mathbb R^k$. The supremum is taken over all parallelepipeds $P$.}
\end{definition}

We also say that neurons $z_1, ..., z_k$ are good at $x$ if there exists a path of neurons from $z$ to the output in the computational graph of $F$ so that each neuron is activated along the path.
Our three main results that hold under the assumptions listed in Appendix \ref{sec:assumptions}, each of which extend and improve upon the theoretical results by \citet{Hanin2019ComplexityOL}, are:

\begin{theorem}
\label{thm:jacob}
Given $F$ a feed-forward ReLU network with input dimension $\nin$, output dimension $1$, and random weights and biases.
Then for any bounded measurable submanifold $M \subset \mathbb R^{\nin}$ and any $k = 1, ...., m$ the average $(m-k)-$dimensional volume of $\calB_{F, k}$ inside $M$,
\begin{align}
\begin{split}
\label{eq:manijacobformula}
    & \mathbb E [\vol_{m - k}(\calB_{F, k} \cap M)] = \sum_{\text{distinct neurons } z_1, ..., z_k \text{ in } F} \int_{M} \mathbb E[Y_{z_1, ..., z_k}] d\vol_m(x),
\end{split}
\end{align}
where $Y_{z_1, ..., z_k}$ is $J_{m, H_k}^M(x) \rho_{b_{1}, ..., b_{k}}(z_{1}(x), ..., z_{k}(x)),$ times the indicator function of the event that $z_j$ is good at $x$ for each $j = 1, ..., k$. Here the function $\rho_{b_{z_1}, ..., b_{z_k}}$ is the density of the joint distribution of the biases $b_{z_1}, ..., b_{z_k}$.
\end{theorem}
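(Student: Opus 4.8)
The plan is to mirror the strategy of \citet{Hanin2019ComplexityOL} for the Euclidean case, replacing their use of the Euclidean coarea formula by the coarea formula on $M$, in which the relevant Jacobian is the intrinsic one of Definition \ref{def:jacob}. First I would decompose $\calB_{F,k}\cap M$ according to which collection of neurons is responsible for each codimension-$k$ piece. Since $F$ is piecewise linear, each neuron $z$ contributes a potential boundary along the level set $\{x:z(x)=b_z\}$ of its pre-activation, and a point lies in $\calB_{F,k}$ precisely when $k$ distinct neurons $z_1,\dots,z_k$ simultaneously attain their thresholds there, each is \emph{good} at $x$ (so that flipping it actually changes $\nabla F$), and the $k$ level sets meet transversally inside $M$. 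Thus, up to a $\vol_{m-k}$-null set,
\begin{equation*}
\calB_{F,k}\cap M \;=\; \bigsqcup_{z_1,\dots,z_k}\,\{x\in M : z_j(x)=b_{z_j},\ z_j \text{ good at } x,\ j=1,\dots,k\},
\end{equation*}
the union taken over distinct $k$-tuples of neurons.

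For a fixed tuple, set $H_k=(z_1,\dots,z_k)\colon M\to\mathbb R^k$ and let $A\subset M$ be the open region on which all $z_j$ are good. Applying the coarea formula of \citet{Krantz2008GeometricIT} to $H_k$ restricted to $A$ converts the $(m-k)$-dimensional volume of a level set into an $m$-dimensional integral over $M$ against the manifold Jacobian: for any nonnegative measurable $g$,
\begin{equation*}
\int_{\mathbb R^k}\Big(\int_{H_k^{-1}(y)\cap A} g\, d\mathcal H^{m-k}\Big)\,dy \;=\; \int_{A} g(x)\, J^{M}_{k,H_k}(x)\, d\vol_m(x).
\end{equation*}
Here $J^M_{k,H_k}$ is exactly the quantity of Definition \ref{def:jacob}, because the differential $D_M$ projects the gradients of the $z_j$ onto $T_xM$ before measuring the parallelepiped they span; this projection is the sole place where the manifold geometry enters and is what replaces the ambient Jacobian of \citet{Hanin2019ComplexityOL}. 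The hypothesis $k\le m$ ensures $H_k^{-1}(y)$ is generically an $(m-k)$-submanifold, so the identity is nonvacuous.

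The probabilistic step integrates out the biases $b_{z_1},\dots,b_{z_k}$. Conditioning on all weights and on the remaining biases, the set of interest is $H_k^{-1}(b)\cap A$ with $b=(b_{z_1},\dots,b_{z_k})$, whose conditional expected volume is $\int_{\mathbb R^k}\rho_{b_{z_1},\dots,b_{z_k}}(y)\,\vol_{m-k}(H_k^{-1}(y)\cap A)\,dy$. Taking $g(x)=\rho_{b_{z_1},\dots,b_{z_k}}(H_k(x))$ in the weighted coarea identity, and using that $H_k(x)=y$ is constant along $H_k^{-1}(y)$, this expectation becomes $\int_A \rho_{b_{z_1},\dots,b_{z_k}}(z_1(x),\dots,z_k(x))\,J^M_{k,H_k}(x)\,d\vol_m(x)$. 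Writing the indicator of $A$ as the ``good'' event and restoring the outer expectation over the weights yields precisely $\int_M \mathbb E[Y_{z_1,\dots,z_k}]\,d\vol_m(x)$; summing over $k$-tuples gives the stated formula.

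I expect two measure-theoretic points to be the main obstacles. The first is justifying the almost-sure decomposition: because the weights and biases admit densities, I must argue that with probability one the contributing level sets meet transversally in $M$, that no more than $k$ neurons coincide on the codimension-$k$ stratum, and that $\calB_{F,k}\cap M$ carries no $\vol_{m-k}$-mass on non-transverse or higher-multiplicity configurations. This is the manifold analogue of the general-position argument of \citet{Hanin2019ComplexityOL} and relies on $\dim(\calB_{F,k}\cap M)=m-k$ (Proposition \ref{prop:manifolddims}). The second is the legitimacy of interchanging the bias integration with the coarea formula in the presence of the ``good'' indicator; the key observation is that goodness of $z_j$ depends only on downstream activations and is therefore measurable with respect to the conditioning, so it may be absorbed into the domain $A$ and is left untouched by the conditional integration over $b_{z_1},\dots,b_{z_k}$. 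With these in hand, the remaining manipulations are routine applications of Fubini and the coarea formula.
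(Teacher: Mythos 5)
Your proposal follows essentially the same route as the paper's own proof: the same decomposition of $\calB_{F,k}\cap M$ into level sets of distinct $k$-tuples of good neurons (with the almost-sure justification resting on the transversality statement of Proposition \ref{prop:manifolddims}), the same smooth coarea formula of \citet{Krantz2008GeometricIT} applied with the intrinsic Jacobian of Definition \ref{def:jacob}, and the same conditional integration over the biases followed by a Tonelli interchange and a final sum over tuples. The two measure-theoretic caveats you flag are precisely the points the paper handles via Propositions \ref{prop:bass}, \ref{prop:dims}, and \ref{prop:manifolddims}, so there is no gap.
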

This change in the formula, from Theorem 3.4 by \citet{Hanin2019ComplexityOL}, is a result of the fact that $z(x)$ has a different direction of steepest ascent when it is restricted to the data manifold $M$, for any $j$.
The proof is presented in Appendix \ref{sec:jacobproof}.
Formula \ref{eq:manijacobformula} also makes explicit the fact that the data manifold has dimension $m \leq \nin$ and therefore the $m - k$-dimensional volume is a more representative measure of the linear boundaries. %change this line or add reference to figure 1
Equipped with Theorem \ref{thm:jacob}, we provide a result for the density of boundary regions on manifold $M$.
\begin{theorem}
\label{thm:numneurons}
For data sampled uniformly from a compact and measurable $m$ dimensional manifold $M$ we have the following result for all $k \leq m$:
\begin{equation*}
    \frac{\text{vol}_{m - k}(\mathcal{B}_{F, k} \cap M)}{\text{vol}_{m}(M)} \leq \begin{pmatrix} \text{\# neurons} \\ k \end{pmatrix} (2C_{\text{grad}} C_{\text{bias}} C_{M})^{k},
\end{equation*}
where $C_{\text{grad}}$ depends on $||\nabla z(x)||$ and the DNN's architecture, $C_{M}$ depends on the geometry of $M$, and $C_{\text{bias}}$ on the distribution of biases $\rho_b$.
\end{theorem}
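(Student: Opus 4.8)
The plan is to obtain the bound directly from the exact volume identity of Theorem~\ref{thm:jacob} by bounding the integrand uniformly and then counting the number of neuron tuples in the sum. Starting from
\begin{equation*}
\mathbb E[\vol_{m-k}(\calB_{F,k}\cap M)] = \sum_{\text{distinct } z_1,\dots,z_k} \int_M \mathbb E[Y_{z_1,\dots,z_k}]\, d\vol_m(x),
\end{equation*}
I would control each of the three factors making up $Y_{z_1,\dots,z_k}$ separately. The indicator of the event that all $z_j$ are good at $x$ is at most $1$. The joint bias density $\rho_{b_{z_1},\dots,b_{z_k}}$ evaluated at $(z_1(x),\dots,z_k(x))$ is bounded by $C_{\text{bias}}^k$ under the assumption (Appendix~\ref{sec:assumptions}) that the biases are independent with individual densities bounded by $C_{\text{bias}}$. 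Finally, the manifold Jacobian of $H_k=(z_1,\dots,z_k)$ from Definition~\ref{def:jacob} is bounded by $(C_{\text{grad}}C_M)^k$, with $C_{\text{grad}}$ absorbing $\sup_x\|\nabla z(x)\|$ and the architectural dependence, and $C_M$ the geometric distortion. Together these give the uniform estimate $\mathbb E[Y_{z_1,\dots,z_k}] \le (2 C_{\text{grad}} C_{\text{bias}} C_M)^k$, the factor $2^k$ being the constant that appears in bounding these terms, exactly as in the Euclidean argument of \citet{Hanin2019ComplexityOL}.

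With a uniform bound on the integrand in hand, the remaining step is combinatorial. The outer sum ranges over unordered collections of $k$ distinct neurons out of the total, so there are $\binom{\text{\# neurons}}{k}$ summands. Since the integrand bound is independent of $x$, each integral contributes at most $(2 C_{\text{grad}} C_{\text{bias}} C_M)^k \vol_m(M)$, whence
\begin{equation*}
\mathbb E[\vol_{m-k}(\calB_{F,k}\cap M)] \le \binom{\text{\# neurons}}{k}(2 C_{\text{grad}} C_{\text{bias}} C_M)^k \vol_m(M).
\end{equation*}
Dividing through by $\vol_m(M)$, which is positive and finite by compactness and measurability of $M$, yields the claimed inequality for the (expected) density ratio under uniform sampling from $M$.

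The main obstacle is the uniform bound on the manifold Jacobian, which behaves differently from its Euclidean counterpart. By Definition~\ref{def:jacob} it is a supremum over $k$-dimensional parallelepipeds $P \subset T_xM$ of the volume distortion $\mathcal H^k(D_M H_k(P))/\mathcal H^k(P)$. The key structural fact is that the restriction of a neuron $z_j$ to $M$ has differential equal to the orthogonal projection of the ambient gradient $\nabla z_j$ onto $T_xM$; since projection is a contraction, $\|D_M z_j\|\le\|\nabla z_j\|$, and the latter is controlled by $C_{\text{grad}}$. The delicate part is to separate the purely geometric contribution $C_M$---which encodes how the tangent spaces are embedded in $\mathbb R^{\nin}$ and the volume distortion of the local parametrization relating $\mathcal H^m$ on $M$ to Lebesgue measure on the chart domain---from the gradient contribution, while keeping the estimate a clean product over the $k$ neurons. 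I expect this to require estimating the Gram determinant of the projected gradients and invoking compactness of $M$ to pass to a bound uniform in $x$.
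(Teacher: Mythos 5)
Your plan follows the same route as the paper's proof: start from Theorem~\ref{thm:jacob}, bound the indicator by $1$ and the joint bias density by $C_{\text{bias}}^k$ under independence (Appendix~\ref{sec:assumptions}), reduce the manifold Jacobian to the Gram determinant of the gradients projected onto $T_xM$, extract the projection contribution as $C_M$, count the $\binom{\#\text{neurons}}{k}$ choices of distinct neurons, and divide by $\vol_m(M)$. The geometric step you flag as the ``main obstacle'' is exactly the paper's Proposition~\ref{prop:linearmap}: $J^M_{k,H_k}(x)=\sqrt{\det\text{Gram}(E\nabla z_1(x),\dots,E\nabla z_k(x))}$, where $E$ is the orthogonal projection onto $T_xM$; Hadamard's inequality then gives $J^M_{k,H_k}(x)\le \|E\|^k\prod_{j=1}^k\|\nabla z_j(x)\|$, and no chart or parametrization distortion enters because Theorem~\ref{thm:jacob} already integrates against $d\vol_m$ on $M$.

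The genuine gap is in your treatment of the gradient factor. You let $C_{\text{grad}}$ absorb $\sup_x\|\nabla z(x)\|$, i.e.\ you bound the integrand by a deterministic constant before taking the expectation. But $\nabla z_j(x)$ is a function of the random weights, so $\sup_x\|\nabla z(x)\|$ is itself a random variable, and for the standard (e.g.\ Gaussian) initializations it is not dominated by any deterministic constant almost surely; your claimed uniform estimate $\mathbb E[Y_{z_1,\dots,z_k}]\le(2C_{\text{grad}}C_{\text{bias}}C_M)^k$ therefore does not follow from the bounds you state. What is needed—and what the paper does—is to keep the product inside the expectation and bound $\mathbb E\bigl[\prod_{j=1}^k\|\nabla z_j(x)\|\bigr]\le C_{\text{grad}}^k$, where $C_{\text{grad}}=\sup_z\sup_{x}\mathbb E[\|\nabla z(x)\|^{2k}]^{1/k}$. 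Because the $k$ gradient norms share the same random weights and are correlated, this moment bound is nontrivial; the paper imports it from Theorem~3.2 of \citet{Hanin2018ProductsOM}, which also supplies the architectural dependence $C_{\text{grad}}\le Ce^{C\sum_j 1/n_j}$. Once you replace your sup-norm bound by this expectation bound (and apply the bias-density bound conditionally on the weights, per assumption A1), the rest of your argument—Gram determinant, $\|E\|^k$ factor, counting, and division by $\vol_m(M)$—coincides with the paper's proof.
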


The constant $C_M$ is the supremum over the matrix norm of projection matrices onto the tangent space, $T_x M$, at any point $x \in M$.
For the Euclidean space $C_M$ is always equal to 1 and therefore the term does not appear in the work by \citet{Hanin2019ComplexityOL}, but we cannot say the same for our setting.
We refer the reader to Appendix \ref{app:proofthm2} for the proof, further details, and interpretation.
Finally, under the added assumptions that the diameter of the manifold $M$ is finite and $M$ has polynomial volume growth we provide a lower bound on the average distance to the linear boundary for points on the manifold and how it depends on the geometry and dimensionality of the manifold.

\begin{theorem}
\label{thm:dist}
For any point, $x$, chosen randomly from $M$, we have:
\begin{align*}
    \mathbb E [&\distfun_{M}(x, \calB_{F} \cap M)] \geq \frac{C_{M, \kappa}}{C_{\text{grad}} C_{\text{bias}} C_{M} \# \text{neurons}},
\end{align*}
where $C_{M, \kappa}$ depends on the scalar curvature, the input dimension and the dimensionality of the manifold $M$. The function $\distfun_M$ is the distance on the manifold $M$.
\end{theorem}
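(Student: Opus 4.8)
The plan is to follow the layer-cake strategy for lower bounding an expected distance, adapted from the Euclidean argument of \citet{Hanin2019ComplexityOL}, but replacing straight-line tubular neighbourhoods by geodesic tubes on $M$. Writing the expectation over both the uniform point $x \in M$ and the random network $F$, the first step is
\begin{equation*}
\mathbb E[\distfun_M(x, \calB_F \cap M)] = \int_0^\infty \Pr[\distfun_M(x, \calB_F \cap M) > t]\, dt \ge \int_0^T \big(1 - \Pr[\distfun_M(x, \calB_F \cap M) \le t]\big)\, dt,
\end{equation*}
where $T$ is a cutoff to be chosen. Lower bounding the right-hand side reduces to \emph{upper} bounding the probability that $x$ lands within geodesic distance $t$ of the linear boundary. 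Note that the intrinsic distance $\distfun_M$ already bakes in the relevant gradient-projection effect: moving $x$ to flip a neuron $z$ requires travelling along $M$ against the \emph{tangential} gradient $\text{proj}_{T_xM}\nabla z(x)$, whose norm is controlled by $C_M$ and $C_{\text{grad}}$, which is exactly how those constants will reappear below.

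Since the codimension-one part of the boundary is $\calB_{F,1} \cap M$ (the higher-codimension pieces $\calB_{F,k}\cap M$, $k\ge 2$, are lower dimensional and form only the singular locus), conditioning on $F$ gives
\begin{equation*}
\Pr\big[\distfun_M(x, \calB_F \cap M) \le t \mid F\big] = \frac{\vol_m\big(\{x \in M : \distfun_M(x, \calB_{F,1}\cap M) \le t\}\big)}{\vol_m(M)},
\end{equation*}
the normalised volume of the geodesic $t$-tube around $\calB_{F,1}\cap M$. The next step is a tube-volume estimate: by a Weyl-type tube formula on $M$, controlled through Jacobi-field (Rauch) comparison, the tube volume is at most $2t\, C_{M,\kappa}'\,\vol_{m-1}(\calB_{F,1}\cap M)$ for all $t$ up to a curvature-dependent scale $t_\kappa$, where $C_{M,\kappa}'$ collects the correction factors coming from the scalar curvature of $M$ and the mean curvature of the boundary hypersurface; the finite-diameter and polynomial-volume-growth hypotheses guarantee this estimate does not degenerate. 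Taking expectation over $F$ and invoking Theorem \ref{thm:numneurons} with $k=1$, namely $\mathbb E[\vol_{m-1}(\calB_{F,1}\cap M)]/\vol_m(M) \le 2\,C_{\text{grad}} C_{\text{bias}} C_M\,\#\text{neurons}$, yields a linear bound $\Pr[\distfun_M \le t] \le A\,t$ with $A = 4\,C_{M,\kappa}'\,C_{\text{grad}} C_{\text{bias}} C_M\,\#\text{neurons}$, valid for $t \le t_\kappa$.

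Finally I would integrate $\int_0^T (1 - At)\,dt = T - AT^2/2$, which is maximised at $T = 1/A$ with value $1/(2A)$. Choosing $T = \min(1/A, t_\kappa)$, the operative regime of large $\#\text{neurons}$ forces $1/A \le t_\kappa$, so the bound becomes $\mathbb E[\distfun_M] \ge C_{M,\kappa}/(C_{\text{grad}} C_{\text{bias}} C_M\,\#\text{neurons})$, with the curvature-dependent numerator $C_{M,\kappa}$ absorbing the tube-correction constant $C_{M,\kappa}'$ and the possibility that $t_\kappa$ is the binding cutoff. The main obstacle I anticipate is the tube-volume estimate on the curved, non-smooth boundary set: $\calB_{F,1}\cap M$ is only piecewise smooth, with corners along the lower-codimension strata $\calB_{F,k}\cap M$, so a clean Weyl formula does not apply directly and one must argue that the singular strata contribute negligibly while simultaneously controlling the ambient curvature of $M$ through comparison geometry. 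This is precisely where the scalar-curvature dependence of $C_{M,\kappa}$ enters and where the bulk of the technical work lies.
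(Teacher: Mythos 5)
Your skeleton is essentially the paper's: replace straight-line neighbourhoods by geodesic tubes, upper-bound $\Pr[\distfun_M(x,\calB_F\cap M)\le t]$ by the normalised tube volume, control that volume with curvature-dependent constants, invoke Theorem \ref{thm:numneurons}, and optimize the free scale to get $C_{M,\kappa}/(C_{\text{grad}}C_{\text{bias}}C_M\,\#\text{neurons})$. (The paper uses a single-$\epsilon$ Markov bound $\mathbb E[\distfun_M]\ge \epsilon(1-\Pr[\distfun_M\le\epsilon])$ and optimizes over $\epsilon$ rather than your truncated layer-cake integral; that difference is cosmetic.)

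The genuine gap is your treatment of the tube volume: you restrict to the codimension-one stratum $\calB_{F,1}\cap M$ and assert a purely linear bound $\vol_m(T_t)\le 2t\,C'_{M,\kappa}\vol_{m-1}(\calB_{F,1}\cap M)$, dismissing the strata $\calB_{F,k}\cap M$, $k\ge 2$, as a negligible singular locus. This is false at the scale that matters. The $t$-tube around a \emph{piecewise}-smooth codimension-one set picks up end/corner contributions of order $t^{j}$ times the $(m-j)$-dimensional volume of the codimension-$j$ stratum, and by Theorem \ref{thm:numneurons} that stratum volume scales like $\binom{\#\text{neurons}}{j}(2C_{\text{grad}}C_{\text{bias}}C_M)^j$. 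At the optimizing scale $t\asymp 1/(C_{\text{grad}}C_{\text{bias}}C_M\,\#\text{neurons})$ each such term is therefore of order one --- exactly the same order as your leading linear term --- so no uniform constant $C'_{M,\kappa}$ can absorb them, and the bound $\Pr\le At$ with $A\propto\#\text{neurons}$ does not hold on the whole interval $(0,1/A]$ you integrate over. The paper's Proposition \ref{prop:volume} resolves this by keeping every stratum: the tube volume is bounded by $\sum_{k} \vol_k(S_k\cap M)\,\omega_{n-k}\,\epsilon^{\,n-k}C_{k,\kappa,U}$, which after applying Theorem \ref{thm:numneurons} to each term turns the probability bound into a polynomial in $\zeta=\epsilon\, C_{\text{grad}}C_{\text{bias}}C_M\#\text{neurons}$, namely $p_U(\zeta)=\zeta\bigl(1-\sum_k C_{\nin-k,\kappa,U}\zeta^k\bigr)$; the constant $C_{M,\kappa}$ is then \emph{defined} as the supremum of this polynomial, with assumptions A4 (finite diameter) and A5 (polynomial volume growth) guaranteeing the supremum exists. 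Since the extra strata only alter the constant and not the $1/\#\text{neurons}$ scaling, your outline is repairable, but the repair is precisely the multi-stratum tube bound you tried to avoid, and it is where the curvature dependence of $C_{M,\kappa}$ actually enters.
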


This result gives us intuition on how the density of linear regions around a point depends on the geometry of the manifold.
The constant $C_{M, \kappa}$ captures how volumes are distorted on the manifold $M$ as compared to the Euclidean space, for the exact definition we refer the reader to the proof in Appendix \ref{app:proof:thm3}.
For a manifold which has higher volume of a unit ball, on average, in comparison to the Euclidean space the constant $C_{M, \kappa}$ is higher and lower when the volume of unit ball, on average, is lower than the volume of the Euclidean space.
For background on curvature of manifolds and a proof sketch we refer the reader to the Appendices \ref{app:background} and \ref{app:proofsketch}, respectively.
Note that the constant $C_M$ is the same as in Theorem \ref{thm:numneurons}.
Another difference to note is that we derive a lower bound on the geodesic distance on the manifold $M$ and not the Euclidean distance in $\mathbb R^k$ as done by \citet{Hanin2019ComplexityOL}.
This distance better captures the distance between data points on a manifold while incorporating the underlying structure.
In other words, this distance can be understood as how much a data point should change to reach a linear boundary while ensuring that all the individual points on the curve, tracing this change, are ``valid'' data points.

\subsection{Intuition For Theoretical Results} \label{sec:intuit}

One of the key ingredients of the proofs by \citet{Hanin2019ComplexityOL} is the \textit{co-area formula} \citep{Krantz2008GeometricIT}.
The co-area formula is applied to get a closed form representation of the $k-$dimensional volume of the region where any set of $k$ neurons, $z_1, z_2, ..., z_k$ is ``good'' in terms of the expectation over the Jacobian, in the Euclidean space.
Instead of the co-area formula we use the \textit{smooth co-area formula} \citep{Krantz2008GeometricIT} to get a closed form representation of the $m-k-$dimensional volume of the region intersected with manifold, $M$, in terms of the Jacobian defined on a manifold (Definition \ref{def:jacob}).
The key difference between the two formulas is that in the smooth co-area formula the Jacobian (of a function from the manifold $M$) is restricted to the tangent plane.
While the determinant of the ``vanilla'' Jacobian measures the distortion of volume around a point in Euclidean space the determinant of the Jacobian defined as above (Definition \ref{def:jacob}) measures the distortion of volume on the manifold instead for the function with the same domain, the function that is 1 if the set of neurons are good and 0 otherwise.

The value of the Jacobian as defined in Definition \ref{def:jacob} has the same volume as the projection of the parallelepiped defined by the gradients $\nabla z(x)$ onto the tangent space (see Proposition \ref{prop:linearmap} in Appendix).
This introduces the constant $C_M$, defined above.
Essentially, the constant captures how the magnitude of the gradients, $\nabla z(x)$, are modified upon being projected to the tangent plane.
Certain manifolds ``shrink'' vectors upon projection to the tangent plane more than others, on an average, which is a function of their geometry.
We illustrate how two distinct manifolds ``shrink'' the gradients differently upon projection to the tangent plane as reflected in the number of linear regions on the manifolds (see Figure \ref{fig:projection} in the appendix) for 1D manifolds.
We provide intuition for the curvature of a manifold in Appendix \ref{app:background}, due to space constraints, which is used in the lower bound for the average distance in Theorem \ref{thm:dist}.
The constant $C_{M, \kappa}$ depends on the curvature as the supremum of a polynomial whose coefficients depend on the curvature, with order at most $\nin$ and at least $\nin - m$.
Note that despite this dependence on the ambient dimension, there are other geometric constants in this polynomial (see Appendix G).
Finally, we also provide a simple example as to how this constant varies with $\nin$ and $m$, for a simple and contrived example, in Appendix \ref{subsec:varyninm}.
\section{Experiments}

\subsection{Linear Regions on a 1D Curve}
\label{subsec:1dmethod}
To empirically corroborate our theoretical results, we calculate the number of linear regions and average distance to the linear boundary on 1D curves for regression tasks in two settings.
The first is for 1D manifolds embedded in 2D and higher dimensions and the second is for the high-dimensional data using the MetFaces dataset.
We use the same algorithm, for the toy problem and the high-dimensional dataset, to find linear regions on 1D curves.
We calculate the exact number of linear regions for a 1D curve in the input space, $x:I \to \mathbb R^{\nin}$ where $I$ is an interval in real numbers, by finding the points where $z(x(t)) = b_z$ for every neuron $z$.
The solutions thus obtained gives us the boundaries for neurons on the curve $x$.
We obtain these solutions by using the programmatic activation of every neuron and using the sequential least squares programming (SLSQP) algorithm \citep{Kraft} to solve for $|z(x(t)) - b_z| = 0$ for $t \in I$.
In order to obtain the programmatic activation of a neuron we construct a Deep ReLU network as defined in Equation \ref{eq:neuroactive}.
We do so for all the neurons for a given DNN with fixed weights.

\subsection{Supervised Learning on Toy Dataset}

\label{sec:toydataexp}

\begin{figure}
    \centering
    \subfigure[]{\includegraphics[width=0.38\textwidth]{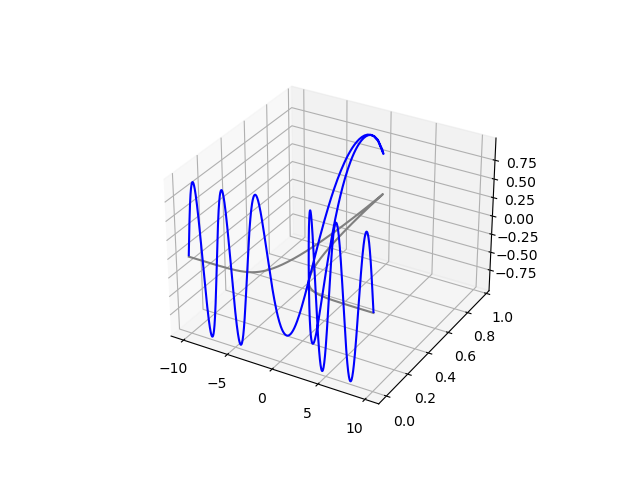}}
    \subfigure[]{\includegraphics[width=0.38\textwidth]{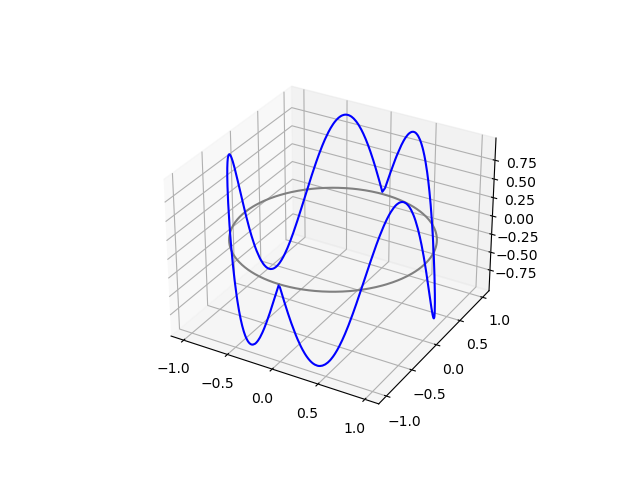}}
    \caption{The tractrix (a) and circle (b) are plotted in grey and the target function is in blue. This is for illustration purposes and does not match the actual function or domains used in our experiments.}
    \label{fig:tractrix}
    \vspace{-10pt}
\end{figure}

We define two similar regression tasks where the data is sampled from two different manifolds with different geometries.
We parameterize the first task, a unit circle without its north and south poles, by $\psi_{\text{circle}}:(-\pi, \pi) \to \mathbb R^2$ where $\psi_{\text{circle}}(\theta) = (\cos \theta, \sin \theta)$ and $\theta$ is the angle made by the vector from the origin to the point with respect to the x-axis.
We set the target function for  regression task to be a periodic function in $\theta$.
The target is defined as $z(\theta) = a \sin (\nu \theta)$ where $a$ is the amplitude and $\nu$ is the frequency (Figure \ref{fig:tractrix}).
DNNs have difficulty learning periodic functions \citep{Ziyin2020NeuralNF}.
The motivation behind this is to present the DNN with a challenging task where it has to learn the underlying structure of the data.
Moreover the DNN will have to split the circle into linear regions.
For the second regression task, a tractrix is parametrized by $\psi_{\text{tractrix}}: \mathbb R^1 \to \mathbb R^2$ where $\psi_{\text{tractrix}}(y) = (y - \tanh y, \sech y)$ (see Figure \ref{fig:tractrix}).
We assign a target function $z(t) = a \sin (\nu t)$.
For the purposes of our study we restrict the domain of $\psi_{\text{tractrix}}$ to $(-3, 3)$.
We choose $\nu$ so as to ensure that the number of peaks and troughs, 6, in the periodic target function are the same for both the manifolds.
This ensures that the domains of both the problems have length close to 6.28.
Further experimental details are in Appendix \ref{app:toyprob}.

The results, averaged over 20 runs, are presented in Figures \ref{fig:graph_lr} and \ref{fig:graph_dists}.
We note that $C_M$ is smaller for Sphere (based on Figure \ref{fig:graph_lr}) and the curvature is positive whilst $C_M$ is larger for tractrix and the curvature is negative.
Both of these constants (curvature and $C_M$) contribute to the lower bound in Theorem \ref{thm:dist}.
Similarly, we show results of number of linear regions divided by the number of neurons upon changing architectures, consequently the number of neurons, for the two manifolds in Figure \ref{fig:multiarch}, averaged over 30 runs.
Note that this experiment observes the effect of $C_M \times C_{\text{grad}}$, since changing the architecture also changes $C_{\text{grad}}$ and the variation in $C_{\text{grad}}$ is quite low in magnitude as observed empirically by \citet{Hanin2019ComplexityOL}.
The empirical observations are consistent with our theoretical results.
We observe that the number of linear regions starts off close to $\# \text{neurons}$ and remains close throughout the training process for both the manifolds.
This supports our theoretical results (Theorem 3.3) that the constant $C_M$, which is distinct across the two manifolds, affects the number of linear regions throughout training.
The tractrix has a higher value of $C_M$ and that is reflected in both Figures \ref{fig:graph_lr} and \ref{fig:graph_dists}.
Note that its relationship is inverse to the average distance to the boundary region, as per Theorem \ref{thm:dist}, and it is reflected as training progresses in Figure \ref{fig:graph_dists}.
This is due to different ``shrinking'' of vectors upon being projected to the tangent space (Section \ref{sec:intuit}).

\begin{figure*}
\begin{multicols}{2}
    \centering
    \includegraphics[width=.4\textwidth]{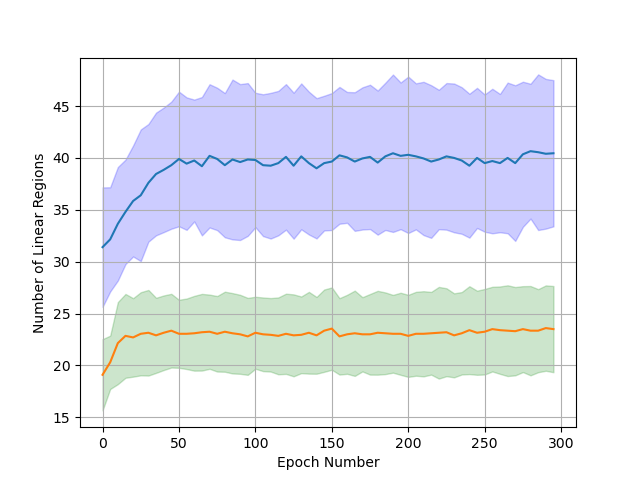}
    \caption{Graph of number of linear regions for tractrix (blue) and sphere (orange). The shaded regions represent one standard deviation. Note that the number of neurons is 26 and the number of linear regions are comparable to 26 but different for both the manifolds throughout training.}
    \label{fig:graph_lr}
    \includegraphics[width=.4\textwidth]{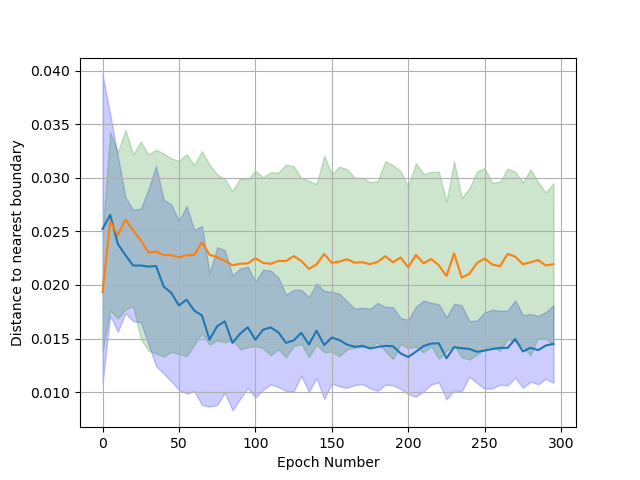}
    \caption{Graph of distance to linear regions for tractrix (blue) and sphere (orange). The distances are normalized by the maximum distance on the range, for both tractrix and sphere. The shaded regions represent one standard deviation. }
    \label{fig:graph_dists}
\end{multicols}

\begin{multicols}{2}
    \centering
    \includegraphics[width=.40\textwidth]{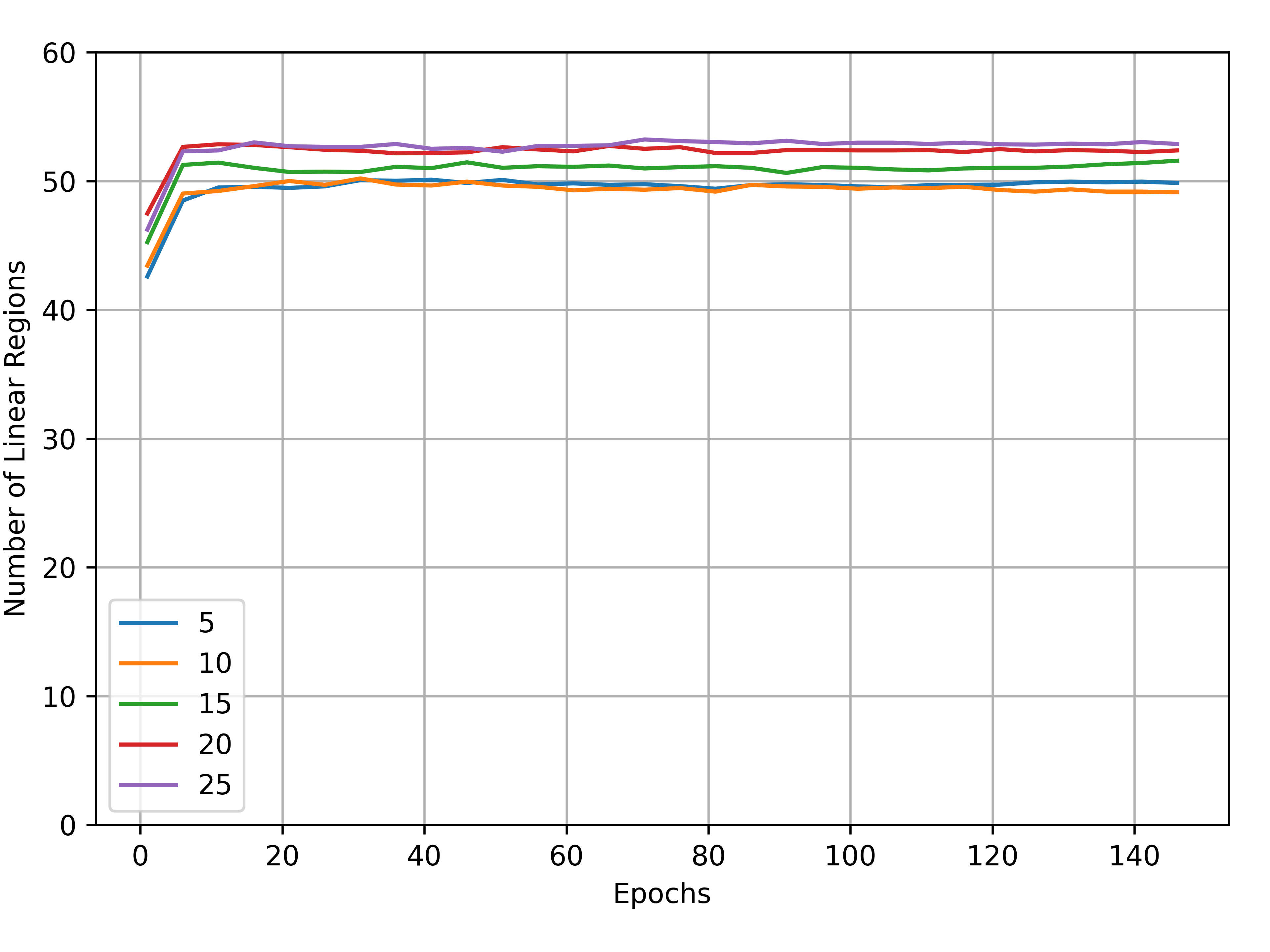}
    \caption{We observe that as the dimension $\nin$ is increased, while keeping the manifold dimension constant, the number of linear regions remains proportional to number of neurons (26).}
    \label{fig:nin_dims}
    \includegraphics[width=.40\textwidth]{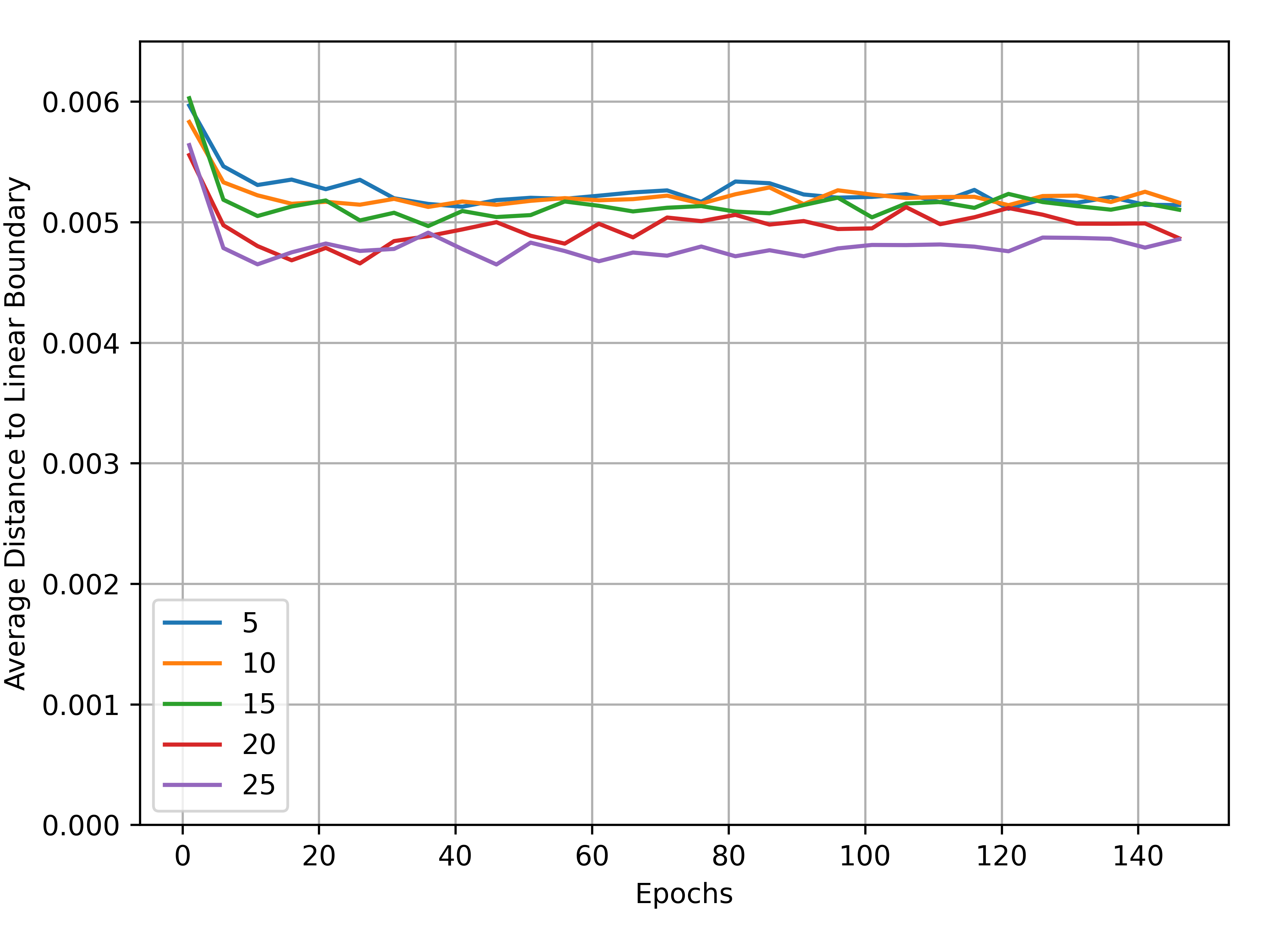}
    \caption{We observe that as the dimension $\nin$ is increased, while keeping the manifold dimension constant, the average distance varies very little.}
    \label{fig:nin_dists}
\end{multicols}

\begin{multicols}{2}
    \centering
    \includegraphics[width=.40\textwidth]{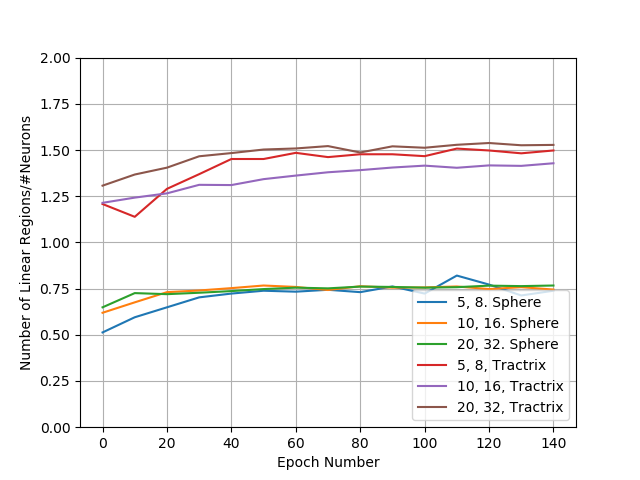}
    \caption{The effects of changing the architecture on the number of linear regions. We observe that the value of $C_M$ effects the number of linear regions proportionally. The number of hidden units for three layer networks are in the legend along with the data manifold.}
    \label{fig:multiarch}
    \includegraphics[width=.40\textwidth]{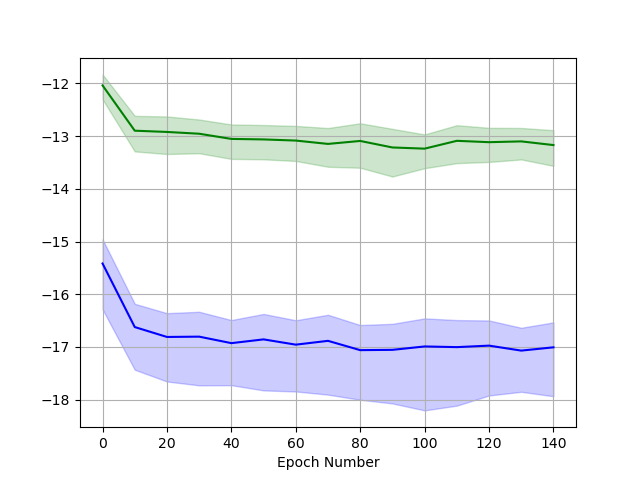}
    \caption{We observe that the log density of number of linear regions is lower on the manifold (blue) as compared to off the manifold (green). This is for the MetFaces dataset.}
    \label{fig:overfit}
\end{multicols}
\end{figure*}

\subsection{Varying Input Dimensions} \label{sec:varyn}
%\paragraph{Varying $\nin$} \label{sec:varyn}
To empirically corroborate the results of Theorems 2 and 3 we vary the dimension $\nin$ while keeping $m$ constant.
We achieve this by counting the number of linear regions and the average distance to boundary region on the 1D circle as we vary the input dimension in steps of 5.
We draw samples of 1D circles in $\mathbb R^{\nin}$ by randomly choosing two perpendicular basis vectors.
We then train a  network with the same architecture as  the previous section on the periodic target function ($a \sin (\nu \theta)$) as defined above.
The results in Figure \ref{fig:nin_dims} shows that the quantities stay proportional to $\# neurons$, and do not vary as $\nin$ is increased, as predicted by our theoretical results.
Our empirical study asserts how the relevant upper and lower bounds, for the setting where data lies on a low-dimensional manifold, does not grow exponentially with $\nin$ for the density of linear regions in a compact set of $\mathbb R^{\nin}$ but instead depend on the intrinsic dimension.
Further details are in Appendix \ref{app:toyprob}.

\subsection{MetFaces: High Dimensional Dataset}
%\paragraph{MetFaces: High Dimensional Dataset}
\label{sec:metfaces}

Our goal with this experiment is to study how the density of linear regions varies across a low dimensional manifold and the input space.
To discover latent low dimensional underlying structure of data we employ a GAN.
Adversarial training of GANs can be effectively applied to learn a mapping from a low dimensional latent space to high dimensional data \citep{Goodfellow2014GenerativeAN}.
The generator is a neural network that maps $g: \mathbb R^k \to \mathbb R^{\nin}$.
We train a deep ReLU network on the MetFaces dataset with random labels (chosen from $0, 1$) with cross entropy loss.
As noted by \citet{Zhang2017UnderstandingDL}, training with random labels can lead to the DNN memorizing the entire dataset.

We compare the log density of number of linear regions on a curve on the manifold with a straight line off the manifold.
We generate these curves using the data sampled by the StyleGAN by  \citep{Karras2020TrainingGA}.
Specifically, for each curve we sample a random pair of latent vectors: $z_1, z_{2} \in \mathbb R^k$, this gives us the start and end point of the curve using the generator $g(z_1)$ and $g(z_2)$.
We then generate 100 images to approximate a curve connecting the two images on the image manifold in a piece-wise manner.
We do so by taking 100 points on the line connecting $z_1$ and $z_2$ in the latent space that are evenly spaced and generate an image from each one of them.
Therefore, the $i^{\text{th}}$ image is generated as: $z_i' = g(((100 - i) \times z_1 + i \times z_2)/100)$, using the StyleGAN generator $g$.
We qualitatively verify the images to ensure that they lie on the manifold of images of faces.
The straight line, with two fixed points $g(z_1)$ and $g(z_2)$, is defined as $x(t) = (1 - t) g(z_1) + t g(z_2)$ with $t \in [0, 1]$.
The approximated curve on the manifold is defined as $x'(t) = (1- t) g(z_i') + t g(z_{i + 1}')$ where $i = \texttt{floor}(100t)$.
We then apply the method from Section \ref{subsec:1dmethod} to obtain the number of linear regions on these curves.

The results are presented in Figure \ref{fig:overfit}.
This leads us to the key observation:
the density of linear regions is significantly lower on the data manifold and devising methods to ``concentrate'' these linear regions on the manifold is a promising research direction.
That could lead to increased expressivity for the same number of parameters.
We provide further experimental details in Appendix \ref{app:hdd}.

\section{Discussion and Conclusions}
There is significant  work in both supervised and unsupervised learning settings for non-Euclidean data \citep{Bronstein2017GeometricDL}.
Despite these empirical results most theoretical analysis is agnostic to data geometry, with a few prominent exceptions \citep{Cloninger2020ReLUNA, Shaham2015ProvableAP, SchmidtHieber2019DeepRN}.
We incorporate the idea of data geometry into measuring the effective approximation capacity of DNNs, deriving average bounds on the density of boundary regions and distance from the boundary when the data is sampled from a low dimensional manifold.
Our experimental results corroborate our theoretical results.
We also present insights into expressivity of DNNs on low dimensional manfiolds for the case of high dimensional datasets.
Estimating the geometry, dimensionality and curvature, of these image manifolds accurately is a problem that remains largely unsolved \citep{Brehmer2020FlowsFS, PerraulJoncas2013NonlinearDR}, which limits our inferences on high dimensional dataset to observations that guide future research.
We note that proving a lower bound on the number of linear regions, as done by \citet{Hanin2019ComplexityOL}, for the manifold setting remains open.
Our work opens up avenues for further research that combines model geometry and data geometry and can lead to empirical research geared towards developing DNN architectures for high dimensional datasets that lie on a low dimensional manifold.

\section{Acknowledgements}

This work was funded by L2M (DARPA  Lifelong Learning Machines program under grant number FA8750-18-2-0117), the Penn MURI (ONR under the PERISCOPE MURI Contract N00014- 17-1-2699), and the ONR Swarm (the ONR under grant number N00014-21-1-2200).
This research was conducted using computational resources and services at the Center for Computation and Visualization, Brown University.

We would like to thank Sam Lobel, Rafael Rodriguez Sanchez, and Akhil Bagaria for refining our work, multiple technical discussions, and their helpful feedback on the implementation details.
We also thank Tejas Kotwal for assistance on deriving the mathematical details related to the 1D Tractrix and sources for various citations.
We thank Professor Pedro Lopes de Almeida, Nihal Nayak, Cameron Allen and Aarushi Kalra for their valuable comments on writing and presentation of our work.
We thank all the members of the Brown robotics lab for their guidance and support at various stages of our work.
Finally, we are indebted to, and graciously thank, the numerous anonymous reviewers for their time and labor as their valuable feedback and thoughtful engagement have shaped and vastly refine our work.

\bibliographystyle{plainnat}
\bibliography{example_paper}

%%%%%%%%%%%%%%%%%%%%%%%%%%%%%%%%%%%%%%%%%%%%%%%%%%%%%%%%%%%%
\section*{Checklist}

%%% BEGIN INSTRUCTIONS %%%

\begin{enumerate}

\item For all authors...
\begin{enumerate}
  \item Do the main claims made in the abstract and introduction accurately reflect the paper's contributions and scope?
    \answerYes{}
  \item Did you describe the limitations of your work?
    \answerYes{}
  \item Did you discuss any potential negative societal impacts of your work?
    \answerNA{Our work is primarily theoretical with few toy experiments we do not see its applicability}
  \item Have you read the ethics review guidelines and ensured that your paper conforms to them?
    \answerYes{}
\end{enumerate}

\item If you are including theoretical results...
\begin{enumerate}
  \item Did you state the full set of assumptions of all theoretical results?
    \answerYes{See Appendix A for a list}
        \item Did you include complete proofs of all theoretical results?
    \answerYes{}
\end{enumerate}

\item If you ran experiments...
\begin{enumerate}
  \item Did you include the code, data, and instructions needed to reproduce the main experimental results (either in the supplemental material or as a URL)?
    \answerYes{See Appendix J}
  \item Did you specify all the training details (e.g., data splits, hyperparameters, how they were chosen)?
    \answerYes{See experimental sections in the Appendix and main body}
        \item Did you report error bars (e.g., with respect to the random seed after running experiments multiple times)?
    \answerYes{Except for the cases where there are multiple graphs that are overlapping (Figure 6,7, 8) because it would make interpreting them difficult.}
        \item Did you include the total amount of compute and the type of resources used (e.g., type of GPUs, internal cluster, or cloud provider)?
    \answerYes{Appendix J}
\end{enumerate}

\item If you are using existing assets (e.g., code, data, models) or curating/releasing new assets...
\begin{enumerate}
  \item If your work uses existing assets, did you cite the creators?
    \answerYes{}
  \item Did you mention the license of the assets?
    \answerYes{}
  \item Did you include any new assets either in the supplemental material or as a URL?
    \answerNo{}
  \item Did you discuss whether and how consent was obtained from people whose data you're using/curating?
    \answerNA{}
  \item Did you discuss whether the data you are using/curating contains personally identifiable information or offensive content?
    \answerNA{}
\end{enumerate}

\item If you used crowdsourcing or conducted research with human subjects...
\begin{enumerate}
  \item Did you include the full text of instructions given to participants and screenshots, if applicable?
    \answerNA{}
  \item Did you describe any potential participant risks, with links to Institutional Review Board (IRB) approvals, if applicable?
    \answerNA{}
  \item Did you include the estimated hourly wage paid to participants and the total amount spent on participant compensation?
    \answerNA{}
\end{enumerate}

\end{enumerate}

\newpage
%%%%%%%%%%%%%%%%%%%%%%%%%%%%%%%%%%%%%%%%%%%%%%%%%%%%%%%%%%%%

\appendix

\section{Assumptions}
\label{sec:assumptions}

We first make explicit the assumptions on the distribution of weights and biases.

\begin{enumerate}
    \item[\textbf{A1:}] The conditional distribution of any set of biases $b_{z_1}, ..., b_{z_k}$ given all other weights and biases has a density $\rho_{z_1, ..., z_k}(b_1, ..., b_k)$ with respect to Lebesgue measure on $\mathbb R^k$.
    
    \item[\textbf{A2:}] The joint distribution of all weights has a density with respect to Lebesgue measure on $\mathbb R^{\#\text{weights}}$.
    
    \item[\textbf{A3:}] The data manifold $M$ is smooth.
    
    \item[\textbf{A4:}] (Only needed for Theorem 3) the diameter of $M$ defined by $d_M =  \sup_{x, y \in M} \distfun_M(x, y)$ is finite.
    
    \item[\textbf{A5:}] (Only needed for Theorem 3) a geodesic ball in manifold $M$ has polynomial volume growth of order $m$.
\end{enumerate}

\section{Additional Background on Manifolds} \label{app:background}

We provide further background on the theory of manifolds.
In this section we first provide the background, definition and an interpretation for the \textbf{scalar curvature} of a manifold at a point.
Every smooth manifold is also equipped with a \textit{Riemannian metric tensor} (or metric tensor in short).
Given any two vectors, $v$ and $w$, in the tangent space of a point $x$ on a manifold $M$, the metric tensor defines a parallel to the dot product in Euclidean spaces.
The metric tensor, at a point $x$, is defined by the smooth functions $g_{ij}: M \to \mathbb R, i, j \in \{1, ..., k\}$.
Where the matrix defined by
\begin{equation*}
    G_x = [g_{ij}(x)] = \begin{bmatrix}
    g_{11}(x) & \hdots & g_{1n}(x) \\
    \vdots & \ddots & \vdots \\
    g_{n1}(x) & \hdots & g_{nn}(x)
    \end{bmatrix}
\end{equation*}
is symmetric and invertible.
The inner product of $u, v \in T_x M$ is then defined by $\langle u, v\rangle_M = u^T G_x v$.
the inner product is symmetric, non-degenerate, and bilinear, i.e.
\begin{align*}
    \langle ku, v \rangle_M =& k \langle u, v \rangle_M =  \langle u, kv \rangle_M, \\
    \langle u + w, v \rangle_M =& \langle u , v \rangle_M + \langle w, v \rangle_M, \\
    \langle u, v \rangle_M =& \langle v, u \rangle_M.
\end{align*}
As can be seen, these properties also hold for the Euclidean inner product (with $G_x = I$ for all $x$).
Let the inverse of $G = [g_{ij}(x)]$ be denoted by $[g^{ij}(x)]$.
Building on this definition of the metric tensor the Ricci curvature tensor is defined as
\begin{align*}
    R_{ij} = & -\frac{1}{2} \sum_{a, b = 1}^n \Big ( \frac{\partial^2 g_{ij}}{\partial x_a \partial x_b} + \frac{\partial^2 g_{ab}}{\partial x_i \partial x_j} - \frac{\partial^2 g_{ib}}{\partial x_j \partial x_a} - \frac{\partial^2 g_{jb}}{\partial x_i \partial x_a}\Big ) g^{ab}\\
    & + \sum_{a, b, c, d = 1}^n \Big ( \frac{1}{2} \frac{\partial g_{ac}}{\partial x_i} \frac{\partial g_{bd}}{\partial x_j } + \frac{\partial g_{ic}}{ \partial x_a} \frac{\partial g_{jd}}{\partial x_b} -\frac{\partial g_{ic}}{ \partial x_a} \frac{\partial g_{jb}}{\partial x_d} \Big ) g^{ab} g^{cd} \\
    & - \frac{1}{4} \sum_{a,b,c,d = 1}^n \Big ( \frac{\partial g_{jc}}{\partial x_i} + \frac{\partial g_{ic}}{\partial x_j} - \frac{\partial g_{ij}}{\partial x_c} \Big ) g^{ab} g^{cd}.
\end{align*}

For geometric interpretations of the above tensors we refer the reader to the work by \citet{Loveridge2004PhysicalAG}.

Another quantity, from the theory of manifolds, which we utilise in our proofs and theorems, is scalar curvature (or Ricci curvature).
The curvature is a measure how much the volume of a geodisic ball on the manifold M, e.g. $S^2$, deviates from a $d - 1$ sphere in the flat space, e.g. $\mathbb R^3$.
The volume on the manifold deviates by an amount proportional to the curvature.
We illustrate this idea in figure \ref{fig:circlecomp}.
We refer the reader to works by \citet{Gray1974TheVO} and \citet{Wan2016GEOMETRICIO} for further technical details.
Since our main theorems relate to the volume of linear regions the scalar curvature plays an important role.
Formally, the scalar curvature of a manifold $M$ at a point $x$ with metric tensor $[g_{ij}]$ and Ricci tensor $[R_{ij}]$ is defined as
\begin{equation*}
    C = \sum_{i, j = 1}^n g^{ij} R_{ij}.
\end{equation*}

\begin{figure}
    \centering
    \subfigure[]{\includegraphics[width=0.42\textwidth]{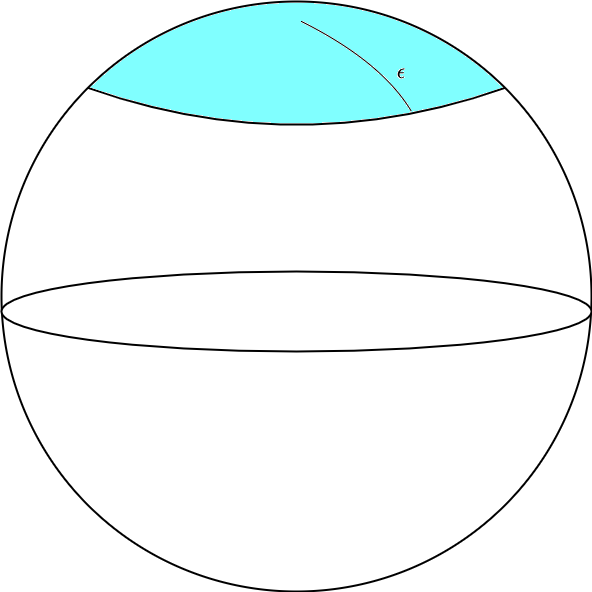}}
    \hspace{2em}
    \subfigure[]{\includegraphics[width=0.42\textwidth]{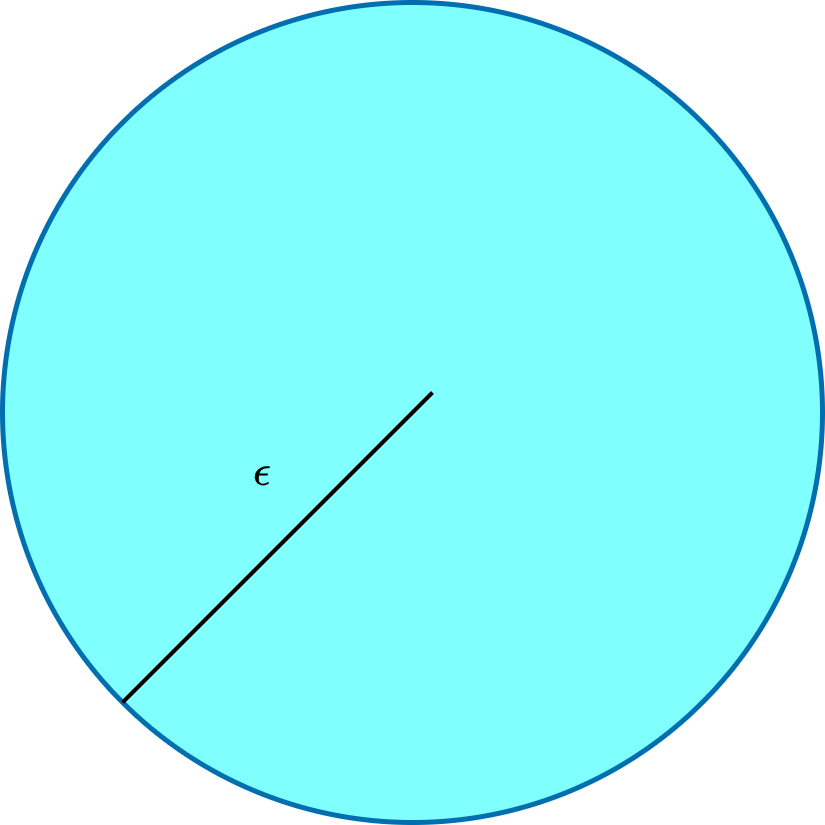}}
    \caption{The geodesic circle on $S^2$ (blue region in (a)) does not have the same area as the flat circle (b), both of radius $\epsilon$. One can imagine cutting the blue top off the sphere's surface and trying to ``flatten'' it. Such an effort will lead to failure, if the material of the sphere does not "stretch", since the geodesic ball, on $S^2$, cannot be mapped to a circle in $\mathbb R^2$ in a distance preserving manner. Thus, the area of the two blue regions in (a) and (b) vary. This deviation in the area spanned by the two spheres, despite their radii being the same, is proportional to the scalar curvature. }
    \label{fig:circlecomp}
\end{figure}

Another important concept is that of \textbf{Hausdorff measure}.
Since the volumes are ``distorted'' on a manifold it requires careful consideration when defining a measure and integrating using it on a manifold.
The $m-$dimensional Hausdorff measure, of a set $S$, is defined as
\begin{equation*}
    H^m(S) \defeq \sup_{\delta > 0}  \inf \Big \{ \sum_{i = 1}^{\infty} (\text{diam } U_i)^d | S \subseteq\cup_{i = 1}^{\infty} U_i, \text{diam } U_i < \delta  \Big \}.
\end{equation*}

Next we introduce the definition of the \textbf{differential map} that is used in Definition 3.1, for the determinant of the Jacobian.
The differential map of a smooth function $H$ from a manifold $M$ to a manifold $S$ at a point $x \in M$ is the smooth map $ dH: T_x M \to T_x S$ such that the tangent vector corresponding to any smooth curve $\gamma: I \to M$ at $x$, $\gamma'(0) \in T_x M$, maps to the tangent vector of $H \circ \gamma$ in $T_{H(x)} N$.
This is the analog of the total derivative of ``vanilla calculus''.
More intuitively, the differential map captures how the function changes along different directions on $N$ as its input changes along different directions on $M$, this also has an analog to how rows of the Jacobian matrix are viewed in calculus.
In Definition 3.1 we use the specific case where the function $H$ maps from manifold $M$ to the Euclidean space $\mathbb R^k$ and the tangent space of a Euclidean space is the Euclidean space itself.
Finally, a paralellepiped's, $P$ in $T_x M$, mapping via the differential map gives us the points in $\mathbb R^k$ that correspond to this set $P$.

\section{Related Work}
\label{app:relwork}

There have been various approaches to explain the efficacy of DNNs in approximating arbitrarily complex functions.
We briefly touch upon two such promising approaches.
Broadly, the theory of DNNs can be viewed from two lenses: expressive power \citep{Hornik1989MultilayerFN, Bartlett1998AlmostLV, Poole2016ExponentialEI, Raghu2017OnTE, Kawaguchi2017GeneralizationID, Neyshabur2018APA, Hanin2019UniversalFA} and learning dynamics \citep{Saxe2014ExactST, Su2016ADE, Smith2018ABP, Jacot2018NeuralTK, Lee2019WideNN, Arora2019ACA, Arora2019ImplicitRI}.
These approaches are not independent of one another but complementary.
For example, \citet{Kawaguchi2017GeneralizationID} argue theoretically how the family of DNNs generalize well despite the large capacity of the function class.
\citet{Neyshabur2018APA} provide PAC-Bayes generalization bounds which are improved upon by \citet{Arora2018StrongerGB}.
\citet{Hanin2019UniversalFA} shows that Deep ReLU networks of finite width can approximate any continuous, convex or smooth functions on a unit cube.
These works look at DNNs from the lens of expressive power.
More recently, there has been a surge in explaining how various algorithms arrive at these almost accurate function approximations by applying different theoretical models of DNNs.
\citet{Jacot2018NeuralTK} provide results for convergence and generalization of DNNs in the infinite width limit by introducing a the neural tangent kernel (NTK).
\citet{Hanin2020FiniteDA} provide finite depth and width corrections for the NTK.
Another line of work within the learning dynamics literature looks at implicit regularization that emerge from the learning algorithm and over-parametrised DNNs \citep{Arora2019ACA, Arora2019ImplicitRI, Du2018AlgorithmicRI, Liang2019FisherRaoMG}.

Researchers have begun to incorporate data geometry into the theoretical analyses of DNNs by applying the assumption that the data lies on a general manifold.
First we note the works looking at DNNs from the lens of expressive power combined with the idea of data geometry.
\citet{Shaham2015ProvableAP} demonstrate that the size of the neural network depends on the curvature of the data manifold and the complexity of the function, whilst depending weakly on the input data dimension, for their construction of sparsely-connected 4-layer neural networks.
\citet{Cloninger2020ReLUNA} show that their construction of deep ReLU nets achieve near optimal approximation rates which depend only on the intrinsic dimensionality of the data.
\citet{Chen2019EfficientAO} exploit the low dimensional structure of data to enhance the function approximation capacity of Deep ReLU networks by means of theoretical guarantees.
\citet{SchmidtHieber2019DeepRN} shows that sparsely connected deep ReLU networks can approximate a Holder function on a low dimensional manifold embedded in a high dimensional space.
Simultaneously, researchers have incorporated data geometry into the learning dynamics line of work \citep{Goldt2020ModellingTI, Paccolat2020GeometricCO, Buchanan2021DeepNA, Wang2021DeepNP}.
\citet{Buchanan2021DeepNA} apply the NTK model to study how DNNs can separate two curves, representing the data manifolds of two separate classes, on the unit sphere.
\citet{Goldt2020ModellingTI} introduce the Hidden Manifold Model for structured data sets to capture the dynamics of two-layer neural networks trained with stochastic gradient descent.
\citet{Rahaman2019OnTS} provide empirical results on which data manifolds are learned faster.
Finally, the work by \citet{novak2018sensitivity} comes the closes in studying the number of linear regions on the data manifold.
They study the change in input output Jacobian, and as a consequence the number of linear regions, for DNNs with piece-wise linearities.
They provide empirical studies by counting the number of linear regions along lines connecting data points as a proxy for number of linear regions on the data manifold. 

Our work fits into the study of expressive power of DNNs.
The number of linear regions is a good proxy for the \textit{practical} expressive power or approximation capacity of Deep ReLU networks \citep{Montfar2014OnTN}.
The results surrounding the density of linear regions make the fewest simplifying assumptions both on the data and the architecture of the DNN.
The results by \citet{Hanin2019ComplexityOL} bound the number of linear regions orders of magnitude tighter than previous results by deriving bounds for the average case and not the worst case.
Moreover, they demonstrate the validity empirically in a setting with very few simplifying assumptions.
We introduce the manifold hypothesis to this setting in order to obtain tighter bounds for the first time.
This introduces a toolbox of ideas from differential geometry to analyse the approximation capacity of deep ReLU networks.

In addition to the theoretical works listed above, there has been significant empirical work that applies DNNs to non-Euclidean data \citep{Bronstein2017GeometricDL, Bronstein2021GeometricDL}.
Here the data is assumed to be sampled from manifolds with certain geometric properties.
For example, \citet{Ganea2018HyperbolicNN} design DNNs for data sampled from Hyperbolic spaces of arbitrary dimensionality and modify the forward and backward passes accordingly.
There have been numerous applications of modified DNNs, namely graph convolutional networks, to graph data that incorporate the idea that graphs are discrete samples from a smooth manifold \citep{Henaff2015DeepCN, Monti2017GeometricDL, Kipf2017SemiSupervisedCW}, see the survey by \citet{Wu2019ACS} for a comprehensive review.
Graph convolutional networks have also been applied to point cloud data for applications in graphics \citep{Qi2017PointNetDL, Wang2019DynamicGC}.

\section{Proof Sketch} \label{app:proofsketch}

In this section we provide an overview of how the three main theorems are proved.
Theorem 3.2 provides an equality for measuring the volume of $m - k$ dimensional boundary regions on the manifold.
To this effect, we introduce the idea of viewing boundary regions as submanifolds on the data manifold instead of hyperplanes (Proposition 6).
We then prove an equality between the volume of boundary regions and the Jacobian of the neurons over the manifold.
We utilise the smooth coarea formula that, intuitively, is applied to integrate a function using level sets on a manifold.
This completes the proof for Theorem 3.2.

To prove Theorem 3.3 we first prove that the Jacobian of a function on a manifold can be denoted using the volume of paralellepiped of vectors in the ambient space subject to a linear transform (Proposition 8).
Using this result and combining it with Theorem 3.2 we can then give an inequality for the density of linear regions.
As can be expected this volume depends on the aforementioned projection, which in turn is related to the geometry of the manifold.

Finally, for proving Theorem 3.4 we first provide an inequality over the tubular neighbourhood of the boundary region.
We then use this result to lower bound the geodesic distance between the boundary region and any random point on the manifold.
The proof strategy follows that of \citet{Hanin2019ComplexityOL} but there are major deviations when it comes to accounting for the geometry of the data manifold.
To the best of our knowledge, we are utilising elements of differential topology that are unique to machine learning when it comes to developing a theoretical understanding of DNNs.

\section{Proof of Theorem 3.2}
\label{sec:jacobproof}

We follow the proof strategy used by \citet{Hanin2019ComplexityOL} but deviate from it to account for our setting where $x \in M$.
Let $S_z$ be the set of values at which the neuron $z$ has a discontinuity in the differential of its output (or the neuron switches between the two linear regions of the piece-wise linear activation $\sigma$),
\begin{equation*}
    S_z \defeq \{ x \in \mathbb R^{n_{\text{in}}} | z(x) - b_z = 0 \}.
\end{equation*}
We also have
\begin{align*}
    \mathcal O \defeq \Big \{ x \in \mathbb R^{n_{\text{in}}} | & \forall j = 1,...,L \ \exists \text{ neuron } z \text{ with } l(z) = j \text{ s.t. } \sigma'(z(x) - b_z) \neq 0\Big \}.
\end{align*}
Further,
\begin{equation*}
    \widetilde{S_z} \defeq S_z \cap \mathcal O.
\end{equation*}
We state propositions 9 and 10 by \citet{Hanin2019ComplexityOL} as we apply them to prove Theorem 3.2, relabeling them as needed.
\begin{proposition}
\label{prop:bass}
\textbf{(Proposition 9 by \citet{Hanin2019ComplexityOL})} Under assumptions A1 and A2, we have, with probability 1,
\begin{equation*}
    B_F = \bigcup_{\text{neurons z}} \widetilde{S_z}.
\end{equation*}
\end{proposition}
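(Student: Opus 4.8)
The plan is to prove the two inclusions separately, noting that the forward inclusion $B_F \subseteq \bigcup_z \widetilde{S_z}$ is essentially deterministic, while the reverse inclusion is where assumptions A1 and A2 (and hence the qualifier ``with probability $1$'') do the real work. First I would record the structural fact that $F$ is a continuous, piecewise-linear function whose pieces are cut out by the joint activation pattern of all neurons, and that each preactivation $x \mapsto z(x) - b_z$ is continuous. At any $x \notin \bigcup_z S_z$ every preactivation is bounded away from its breakpoint, so each sign $\operatorname{sign}(z(x) - b_z)$ is locally constant (a finite intersection of open conditions); on that neighborhood $F$ coincides with a fixed affine map, so $\nabla F$ is locally constant and in particular continuous. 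This gives $B_F \subseteq \bigcup_z S_z$. To sharpen this to $\widetilde{S_z} = S_z \cap \mathcal O$, I would show $B_F \subseteq \mathcal O$: if $x \notin \mathcal O$ then some entire layer $j$ is inactive, i.e. every neuron $z$ with $l(z)=j$ has $\sigma'(z(x)-b_z)=0$, which for ReLU is an open condition on the preactivation; by continuity the whole of layer $j$ remains inactive on a neighborhood, so its output is identically zero there and $F$ is locally constant, whence $\nabla F \equiv 0$ is continuous at $x$. Intersecting the two containments yields $B_F \subseteq \left(\bigcup_z S_z\right)\cap\mathcal O = \bigcup_z \widetilde{S_z}$.

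For the reverse inclusion I would fix a neuron $z$ and a point $x \in \widetilde{S_z}$ and show that, with probability $1$, crossing the hypersurface $S_z$ transversally changes $\nabla F$. The jump in $\nabla F$ across $S_z$ is obtained by the chain rule: flipping the activation of $z$ changes the backpropagated gradient by the sum, over directed paths from $z$ to the output, of the product of the intervening weights times the activation indicators of the neurons on the path, which is an explicit polynomial in the network weights. On the two adjacent activation regions separated by $S_z$ the gradients are two such polynomials, and their difference is a polynomial in the weights that is not identically zero. By A2 the weights have a density with respect to Lebesgue measure, so the zero set of this polynomial is null; taking the union of the finitely many bad events (one per neuron and per adjacent pair of regions) keeps the exceptional set of weights null. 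Assumption A1 on the biases guarantees that the switching sets are genuine hypersurfaces and, in generic position, that the measure-zero set of points lying in several $S_z$ simultaneously does not force the individual jumps to cancel on the full-dimensional pieces of $\bigcup_z\widetilde{S_z}$. Hence with probability $1$, every point of $\bigcup_z \widetilde{S_z}$ is a genuine gradient discontinuity, i.e. lies in $B_F$.

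The main obstacle is this reverse inclusion, and specifically controlling cancellations: a switching neuron need not produce a discontinuity if its forward contribution to the output vanishes, so the crux is to argue that the propagated jump, viewed as a polynomial in the weights, does not vanish identically and therefore vanishes only on a null set. Handling points where several neurons switch at once — where one neuron's jump could in principle be annihilated by another's — and checking that the layerwise condition $\mathcal O$ is exactly what prevents the forward path from being identically killed, are the delicate points; both are dispatched by the genericity afforded by A1 and A2 rather than by any single explicit computation.
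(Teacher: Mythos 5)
The paper never proves this statement: it is imported verbatim (with relabeling) from \citet{Hanin2019ComplexityOL}, and the text explicitly refers the reader to Section B of the appendix of that work for its justification. So there is no internal proof to compare against; the relevant comparison is with the cited external proof, and your sketch reconstructs essentially that argument and its decomposition. The forward inclusion via local constancy of the activation pattern (so $F$ is locally affine off $\bigcup_z S_z$, and locally constant off $\mathcal O$ because a whole layer is dead), and the reverse inclusion via genericity — the gradient jump across $S_z$ is a polynomial in the weights, not identically zero, hence nonvanishing off a Lebesgue-null set by A2 — is exactly the strategy of the cited proof, and you correctly locate the crux at ruling out cancellation of the propagated jump.

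Two places where your sketch is looser than a complete proof would need to be. First, your ``finitely many bad events (one per neuron and per adjacent pair of regions)'' is circular as phrased, because the regions themselves are determined by the realized weights; the standard fix is to index the bad events by activation patterns, of which there are at most $2^{\#\text{neurons}}$ independently of the weights. For a fixed pattern in which $z$ has an open path to the output, the jump polynomial contains that path's weight-monomial with coefficient $\pm 1$, and distinct paths contribute distinct monomials, so the polynomial is not identically zero — this is the argument your ``not identically zero'' assertion needs but does not supply. Note also that the jump factors as (path-sum from $z$ to the output) times $\nabla z(x)$, and the nonvanishing of $\nabla z(x)$ requires its own genericity argument via A1 (this is also what makes $S_z$ a genuine hypersurface); relatedly, extracting an open path from $z$ to the output out of the layerwise condition $\mathcal O$ should be made explicit (in a fully connected network, pick one active neuron per subsequent layer). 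Second, your claim that layerwise inactivity is ``an open condition'' holds only under the convention that $\sigma'$ at the ReLU kink counts as nonzero; under the opposite convention, $B_F \subseteq \mathcal O$ fails at points where a dead layer contains a neuron sitting exactly at its threshold, so the convention must be fixed for the forward inclusion to go through. Neither issue is a wrong turn — both are resolvable exactly along the lines of the proof you are reconstructing — but as written they are gaps at the level of rigor.
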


By extending the notion of $S_z$ to multiple neurons we have
\begin{equation*}
    \widetilde{S}_{z_1,...,z_k} \defeq \bigcap_{j = 1}^{k} \widetilde{S}_{z_j}, 
\end{equation*}
meaning that the set $\widetilde{S}_{z_1,...,z_k}$ is, intuitively, the collection of inputs in $\mathbb R^{\text{in}}$ where the neurons $z_j, j = 1,...,k,$ switch between linear regions for $\sigma$ and at which the output of $F$ is affected by the outputs of these neurons.
We refer the reader to section B of the appendix in the work by \citet{Hanin2019ComplexityOL} for an intuitive explanation of proposition \ref{prop:bass}. 
Before proceeding we provide a formal definition and intuition for the set $\mathcal B_{F, k}$,
\begin{align*}B_{F, k} = \{ &x | x \in B_F \setminus \{\mathcal B_{F, 0} \cup ... \cup \mathcal B_{F, k - 1} \} = \mathcal B_{F, -k} \text{ and for any ball of radius } \epsilon > 0, \\
&B(x, \epsilon) \cap \mathcal B_{F, -k} \text{ is subset to a } n-k \text{ dimensional hyperplane} \}.
\end{align*} 
Following the explanation provided by \citet{Hanin2019ComplexityOL}, $\mathcal B_{F, k}$ is the $\nin - k$ dimensional piece of $\mathcal B_F$.
Suppose the boundaries of linear regions for $\nin = 2$ are unions of polygon boundaries, as depicted in Figure 2 of the main body of the paper, then $\mathcal B_{F, 1}$ are all the open line segments of these polygons and $\mathcal B_{F, 2}$ are the end points.
Next we state Proposition 10 by \citet{Hanin2019ComplexityOL}.
\begin{proposition}
\label{prop:dims}
\textbf{(Prosposition 10 by \citet{Hanin2019ComplexityOL})} Fix $k = 1, ..., n_{\text{in}}$, and $k$ distinct neurons $z_1, ..., z_k$ in $F$. Then, with probability 1, for every $x \in B_{F, k}$ there exists a neighbourhood in which $B_{F, k}$ coincides with a $n_{\text{in} - k}-$dimensional hyperplane.
\end{proposition}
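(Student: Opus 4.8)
The plan is to reduce the statement to a transversality fact about the gradients of the pre-activations, and then to establish that fact by a measure-zero argument over the random weights. First I would exploit that the claim is purely local. On any open set on which the activation pattern of every neuron is constant, $F$ is an affine map and each pre-activation $z(x)$ is affine in $x$; consequently each switching set $S_z = \{x : z(x) = b_z\}$, restricted to such an open set, is a piece of an affine hyperplane whose normal is the (locally constant) gradient $\nabla z(x)$. Hence, near a point $x \in B_{F,k}$, the boundary $B_F$ is a finite union of affine hyperplane pieces, and by Proposition \ref{prop:bass} these are exactly the pieces $\widetilde{S}_{z}$ for the neurons $z$ whose switching set passes through $x$. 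By the recursion defining $B_{F,k}$ (the point $x$ lies in $B_F$ but in none of $B_{F,0},\dots,B_{F,k-1}$), there are locally exactly $k$ such neurons $z_1,\dots,z_k$, and near $x$ one has $B_{F,k} = \bigcap_{j=1}^{k} \widetilde{S}_{z_j}$.

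Next I would observe that the intersection of the $k$ affine hyperplanes $S_{z_1},\dots,S_{z_k}$ has dimension $\nin - r$, where $r$ is the rank of the matrix $G(x)$ whose rows are $\nabla z_1(x),\dots,\nabla z_k(x)$. Thus the whole proposition reduces to the claim that, with probability $1$, for every activation pattern and every choice of $k$ distinct good neurons the gradients $\nabla z_1(x),\dots,\nabla z_k(x)$ are linearly independent, so that $r = k$ and the intersection is an $(\nin-k)$-dimensional affine subspace. I would prove this genericity statement by fixing, one at a time, an activation pattern $\mathcal A$ and a $k$-subset of neurons: for each such fixed choice the gradients are fixed polynomial functions of the network weights (products of weight matrices along the paths active under $\mathcal A$), so linear dependence is equivalent to the simultaneous vanishing of all $k\times k$ minors of $G$, each a polynomial in the weights.

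The core step is then to show that at least one of these minors is not the identically-zero polynomial, which I would do by exhibiting a single weight assignment making the $k$ gradients independent. Writing each gradient in the factored form $\nabla z_j(x) = u_j^{\top} W_1$, with $u_j \in \mathbb R^{n_1}$ determined by the downstream active weights, I would first choose the deeper weights so that the coefficient vectors $u_1,\dots,u_k$ are linearly independent (here the neurons being distinct and good is used, so that no gradient is forced to vanish or to coincide with another), and then choose $W_1$ generically so that the matrix $U W_1$ has full row rank $k$; this is possible precisely because $k \leq \nin$. Since the exhibited minor is a nontrivial polynomial, its zero set has Lebesgue measure zero, and by assumption A2 the corresponding bad event has probability zero. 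As there are only finitely many activation patterns and finitely many $k$-subsets of neurons, the union of all bad events remains a probability-zero event; on its complement every such intersection is transversal, and therefore near every $x \in B_{F,k}$ the set $B_{F,k}$ coincides with the $(\nin-k)$-dimensional affine subspace $\bigcap_{j=1}^{k} S_{z_j}$, as claimed.

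I expect the main obstacle to be the core genericity step for neurons in deep layers. For first-layer neurons the gradients are simply distinct rows of $W_1$ and independence is immediate, but for deep neurons the gradients are entangled products of weight matrices sharing common factors, so one must verify that distinct good neurons genuinely admit a weight configuration producing independent gradients rather than a forced algebraic dependence. Subsidiary care is also needed in handling the recursion that defines $B_{F,k}$ — in particular in ensuring that exactly $k$ switching sets meet at $x$ — and in using the set $\mathcal O$ and the good-neuron condition to guarantee that the hyperplane pieces are actually present and that each $\nabla z_j(x)$ is nonzero.
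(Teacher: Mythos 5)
First, note what you are comparing against: the paper does not prove this statement at all --- it is quoted verbatim as Proposition 10 of \citet{Hanin2019ComplexityOL} and used as an imported black box --- so the relevant comparison is with the argument in that source. Your outline (local affineness on regions of constant activation pattern, reduction to general position of the switching hyperplanes, a measure-zero union over finitely many activation patterns and $k$-subsets) parallels that argument in spirit, but your core step contains a genuine gap, and it is exactly the obstacle you flagged at the end.

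The claim that every choice of $k$ distinct good neurons admits \emph{some} weight assignment making $\nabla z_1(x),\dots,\nabla z_k(x)$ linearly independent is false in general. Every pre-activation gradient factors through the early layers: a neuron in layer $l$ has $\nabla z(x) = W_{l,z}\,D_{l-1}W_{l-1}\cdots D_1 W_1$ with $D_i$ diagonal $0/1$ matrices, so if all $k$ chosen neurons lie strictly above layer $j$, their gradients lie in the row space of $D_j W_j\cdots D_1 W_1$, which has rank at most $n_j$. Hence whenever some hidden width satisfies $n_j < k$ (the proposition allows any $k\le n_{\text{in}}$ and makes no width assumption), all $k\times k$ minors of your matrix vanish identically in the weights, and no generic choice can repair this; in your own notation, $U W_1$ with $U$ of size $k\times n_1$ has rank at most $n_1$, so the parenthetical ``this is possible precisely because $k\le n_{\text{in}}$'' is wrong --- you would need $k$ bounded by the intermediate widths as well. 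The proposition nevertheless remains true in these bottleneck cases, but for a different reason, and this is the mechanism your proof is missing: within a region of fixed activation pattern the system $z_j(x)=b_{z_j}$, $j=1,\dots,k$, is affine, and when the normals satisfy a nontrivial relation $\sum_j c_j \nabla z_j = 0$, consistency of that system imposes the corresponding nontrivial affine relation on $(b_{z_1},\dots,b_{z_k})$ --- an event of probability zero under the conditional bias density assumption \textbf{A1}. The correct almost-sure dichotomy, uniform over all patterns and $k$-subsets, is therefore: either the normals are independent (and the intersection is a relatively open piece of an $(n_{\text{in}}-k)$-dimensional affine subspace), or the $k$-fold intersection is empty, in which case the statement is vacuous there. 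Your proposal never invokes \textbf{A1}, which is the telltale sign of the gap; the same bias-genericity argument is also what rules out two distinct neurons having coincident switching hyperplanes, i.e.\ the ``exactly $k$ hyperplanes meet at $x$'' step that you deferred as subsidiary care.
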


We now present Proposition \ref{prop:manifolddims}, and its proof, which incorporates the additional constraint that $x \in M$, which is an $m$-dimensional manifold in $\mathbb R^{\nin}$.  To prove the proposition we need the definition of tranversal intersection of two manifolds \citep{guilleminpollock1974}. 
\begin{definition} \label{def:trans}
\textit{
Two submanifolds, $M_1$ and $M_2$, of $S$ are said to intersect transversally if at every point of intersection their tangent spaces, at that point, together generate the tangent space of the manifold, $S$, by means of linear combinations. Formally, for all $x \in M_1 \cap M_2$
\begin{equation*}
    T_x S = T_x M_1 + T_x M_2,
\end{equation*}
if and only if $M_1$ and $M_2$ intersect transversally.}
\end{definition}

For example, given a 2D hyperplane, $P$, and the surface of a 3D sphere, $S^2$, intersect in the ambient space $\mathbb R^3$. We have that this intersection is transverse if and only if $P$ is not tangent to $S_2$. For the case where a 2D hyperplane, $\bar{P}$, intersects with $S^2$ at a point $p$ but does not intersect tranversally it coincides exactly with the tangent plane of $S^2$ at point $\{p\} = S ^2 \cap P$, i.e. $T_p S = P$. Note that in either case the tangent space of the 2D hyperplane $P$ at any point of intersection is the plane itself.

\begin{proposition}
\label{prop:manifolddims}
Fix $k = 1, ..., m$ and $k$ distinct neurons $z_1, ..., z_k$ in $F$. Then, with probability 1, for every $x \in B_{F, k} \cap M$ there exists a neighbourhood in which $B_{F, k}$ coincides with an $m - k$ dimensional submanifold in $\mathbb R^{\text{in}}$.
\end{proposition}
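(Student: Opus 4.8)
The plan is to combine Proposition~\ref{prop:dims} with a transversality argument. Proposition~\ref{prop:dims} tells us that, with probability $1$, near any $x \in B_{F,k}$ the set $B_{F,k}$ agrees with an $(\nin-k)$-dimensional hyperplane; on the linear region containing $x$ each neuron $z_j$ is affine, so this hyperplane is exactly $\{y : z_j(y) = b_{z_j}, \ j = 1,\dots,k\}$, and its normal space is $\operatorname{span}(\nabla z_1(x),\dots,\nabla z_k(x))$. The goal is therefore to show that, with probability $1$, this hyperplane meets $M$ transversally in the sense of Definition~\ref{def:trans}. Once transversality is established, the standard result of differential topology \citep{guilleminpollock1974}, that a transversal intersection of two submanifolds of $\mathbb R^{\nin}$ of codimensions $k$ and $\nin-m$ is again a submanifold of codimension $k+(\nin-m)$, immediately gives that $B_{F,k}\cap M$ is locally an $(m-k)$-dimensional submanifold, which is the claim.

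It remains to reformulate and then verify transversality. Writing $Z = (z_1,\dots,z_k) : \mathbb R^{\nin}\to\mathbb R^k$ and taking orthogonal complements, the condition $T_x\mathbb R^{\nin} = T_x(\text{hyperplane}) + T_x M$ is equivalent to
\begin{equation*}
(T_x M)^{\perp} \cap \operatorname{span}\big(\nabla z_1(x),\dots,\nabla z_k(x)\big) = \{0\},
\end{equation*}
i.e. the orthogonal projections of the gradients $\nabla z_j(x)$ onto $T_x M$ are linearly independent. Equivalently, the differential of the restriction $Z|_M : M \to \mathbb R^k$ is surjective at $x$, so transversality along $B_{F,k}\cap M$ is exactly the statement that the bias vector $(b_{z_1},\dots,b_{z_k})$ is a regular value of $Z|_M$.

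To see that this holds with probability $1$, I would apply Sard's theorem region by region. The input space is partitioned into finitely many activation regions on each of which $Z$ is affine, hence smooth; restricting to $M$ intersected with each region, Sard's theorem shows the set of critical values of $Z|_M$ has Lebesgue measure zero in $\mathbb R^k$, and a finite union over regions preserves this. By Assumption~A1 the biases $b_{z_1},\dots,b_{z_k}$ have a conditional density with respect to Lebesgue measure, so with probability $1$ the bias vector avoids this null set and is a regular value; Assumption~A2 guarantees (via Propositions~\ref{prop:bass} and \ref{prop:dims}) that the gradients entering the argument are well defined. Conditioning on the weights and then integrating over the biases completes the probability-$1$ statement.

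The main obstacle is the piecewise-linear, rather than smooth, nature of $Z$: Sard's theorem cannot be applied to $Z|_M$ globally, so care is needed to run the argument on each of the finitely many linear regions, to check that the regular-value property is not destroyed across region boundaries, and to confirm that the recursive removal of the lower strata $\calB_{F,0},\dots,\calB_{F,k-1}$ together with the ``good neuron'' condition built into $\widetilde{S}_{z_1,\dots,z_k}$ leaves the local structure of the intersection intact. Handling the interaction of these finitely many strata with the transversality argument, rather than any single calculation, is the delicate part.
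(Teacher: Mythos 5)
Your proposal follows essentially the same route as the paper's own proof: invoke Proposition~\ref{prop:dims} for the local $(\nin-k)$-dimensional hyperplane structure, argue that this hyperplane intersects $M$ transversally with probability 1 under assumptions A1--A2, and conclude from the transversal-intersection dimension formula of \citet{guilleminpollock1974} that $B_{F,k}\cap M$ is locally an $(m-k)$-dimensional submanifold. The only difference is that where the paper asserts the probability-1 transversality directly from A1--A2, you supply an explicit mechanism (reformulating transversality as a regular-value condition and applying Sard's theorem on each affine region together with the bias density), which is a more careful justification of the same step.
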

\begin{proof}
From Proposition \ref{prop:dims} we already know that $B_{F, k}$ is a $n_{\text{in}} - k$-dimensional hyperplane in some neighbourhood of $x$, with probability 1, for any $x \in B_{F, k} \cap M$. Let this hyperplane be denoted by $P_k$. This is an $n-k$ dimensional submanifold of $\mathbb R^{\nin}$. The tangent space of this hyperplane at $x$ is the hyperplane itself. Therefore, from assumptions A1 and A2 we have that the probability that this hyperplane intersects the manifold $M$ transversally with probability 1.
In other words the probability that this plane $P_k$ contains or is contained in $T_x M$ is $0$.
Finally, we have the intersection, $M \cap H_k$, has dimension $\dim(M) + \dim(H_k) - \nin$ \citep{guilleminpollock1974}, which is equal to $m - k$.
\end{proof}
One implication of Proposition \ref{prop:manifolddims} is that for any $k \leq m$ the $m - (k + 1)$ dimensional volume of $ B_{F, k} \cap M$ is 0. In addition to that, Proposition \ref{prop:manifolddims} implies that, with probability 1,
\begin{equation}\label{eq:summation}
    \vol_{m - k}(\calB_{F, k}) = \sum_{\text{distinct neurons }z_1, ..., z_k} \vol_{m - k}( \widetilde{S}_{z_1,...,z_k} \cap M).
\end{equation}

The final step in the proof of Theorem 3.2 is to prove the following result.
\begin{proposition}\label{prop:jacobsum}
Let $z_1, ..., z_k$ be distinct neurons in $F$ and $k \leq m$. Then for a bounded $m-$Hausdorff measurable manifold $M$ embedded in $\mathbb R^{\nin}$,
\begin{align*}
    \mathbb E\Big[ &\vol_{m - k}\Big ( \widetilde{S}_{z_1, ..., z_k} \cap M\Big ) \Big]  = \int_M \mathbb E \Big [ Y_{z_{1}, ..., z_{k}}(x) \Big ] dx,
\end{align*}
where $Y_{z_{1}, ..., z_{k}}(x)$ equals
\begin{equation*}
    J_{m, H_k}^M(x) \rho_{b_{1}, ..., b_{k}}(z_{1}(x), ..., z_{k}(x)),
\end{equation*}
times the indicator function of the event that $z_{j}$, for $j = 1, ..., k$, is good at $x$ for every $j$ and $H_{k}: \mathbb R^{\nin} \to \mathbb R^k$ is such that $H_{k}(x) = [z_{1}(x), ..., z_{k}(x)]^T$. The expectation is over the distribution of weights and biases.
\end{proposition}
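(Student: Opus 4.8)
The plan is to identify $\widetilde S_{z_1,\dots,z_k}\cap M$ with a fiber (level set) of the restricted map $H_k|_M\colon M\to\mathbb R^k$ intersected with the ``good'' set, and then to move the $(m-k)$-dimensional volume of that fiber onto $M$ by the smooth coarea formula of \citet{Krantz2008GeometricIT}, the integrand picking up exactly the manifold Jacobian of Definition \ref{def:jacob}. First I would condition on all the weights and on all biases other than $b\defeq(b_{z_1},\dots,b_{z_k})$; by assumption A1 the vector $b$ then has density $\rho_{b_1,\dots,b_k}$ on $\mathbb R^k$. For each value $y\in\mathbb R^k$ the set $\{x\in M:z_j(x)=y_j,\ j=1,\dots,k\}$ is the fiber $(H_k|_M)^{-1}(y)$, so that $\widetilde S_{z_1,\dots,z_k}\cap M$ equals this fiber at $y=b$ intersected with the event $G=G(x)$ that every $z_j$ is good at $x$, giving the conditional identity
\begin{equation*}
\mathbb E\big[\vol_{m-k}(\widetilde S_{z_1,\dots,z_k}\cap M)\,\big|\,\mathrm{rest}\big]=\int_{\mathbb R^k}\rho_{b_1,\dots,b_k}(y)\,\vol_{m-k}\big((H_k|_M)^{-1}(y)\cap G\big)\,dy.
\end{equation*}

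Next I would apply the smooth coarea formula to $H_k|_M$, which is legitimate since $k\le m$, $M$ is smooth (A3), and $H_k$ is piecewise linear hence Lipschitz, so the formula applies region by region. Choosing the test integrand $g(x)=\mathbf 1_{G}(x)\,\rho_{b_1,\dots,b_k}(z_1(x),\dots,z_k(x))$ and using that $H_k(x)=y$ on the fiber $(H_k|_M)^{-1}(y)$, so that $\rho_{b_1,\dots,b_k}(z_1(x),\dots,z_k(x))=\rho_{b_1,\dots,b_k}(y)$ there, the coarea formula reads
\begin{equation*}
\int_M \mathbf 1_{G}(x)\,\rho_{b_1,\dots,b_k}(z_1(x),\dots,z_k(x))\,J^M_{m,H_k}(x)\,d\vol_m(x)=\int_{\mathbb R^k}\rho_{b_1,\dots,b_k}(y)\,\vol_{m-k}\big((H_k|_M)^{-1}(y)\cap G\big)\,dy.
\end{equation*}
The right-hand side is exactly the conditional expectation above, and the left-hand integrand is $Y_{z_1,\dots,z_k}(x)$. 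Taking expectation over the remaining weights and biases and exchanging the order of integration by Tonelli (justified by boundedness of $M$ and local integrability of $\rho$) then yields the asserted equality.

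The step I expect to be the main obstacle, and where the proof genuinely departs from \citet{Hanin2019ComplexityOL}, is verifying that the Jacobian factor produced by the coarea formula is the manifold Jacobian $J^M_{m,H_k}$ of Definition \ref{def:jacob} rather than the ambient one. Geometrically, at an $x$ in the interior of a linear region the ambient differential of $H_k$ has rows $\nabla z_1(x),\dots,\nabla z_k(x)$, and restricting the differential to $T_xM$ replaces these gradients by their projections onto the tangent space; I would therefore prove that the supremum defining $J^M_{m,H_k}$ is attained and equals the $k$-volume of the parallelepiped spanned by the projected gradients, which is precisely the content of Proposition \ref{prop:linearmap}. Two measure-theoretic points remain: the lower-dimensional skeleton where $H_k$ is not differentiable, and the set where $H_k|_M$ fails to be transverse to the value $b$, both contribute zero $(m-k)$-volume with probability one by Proposition \ref{prop:manifolddims} together with A1--A2; and the bookkeeping in the conditioning must respect the network's dependency structure, using that each pre-activation $z_j$ is independent of its own bias $b_{z_j}$, exactly as in the conditioning scheme of \citet{Hanin2019ComplexityOL}.
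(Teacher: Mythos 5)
Your proposal is correct and follows essentially the same route as the paper's proof: condition on all weights and all other biases so that $(b_{z_1},\dots,b_{z_k})$ has density $\rho_{b_1,\dots,b_k}$ by A1, express the conditional expected $(m-k)$-volume as an integral over $\mathbb R^k$ of the density times the fiber volume, apply the smooth coarea formula of \citet{Krantz2008GeometricIT} (Theorem 5.3.9) with the test integrand given by the indicator of the good event times the bias density, and then commute the remaining expectation with the integral over $M$ by non-negativity (Tonelli). Your additional care in spelling out the choice of test function $g$, the evaluation of the density on the fiber, and the role of Propositions \ref{prop:linearmap} and \ref{prop:manifolddims} only makes explicit points the paper treats tersely or defers to the proof of Theorem 3.3.
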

\begin{proof}
Let $z_1, ..., z_k$ be distinct neurons in $F$ and $M$ be an $m-$dimensional compact Haudorff measurable manifold. We seek to compute the mean of $\vol_{m - k}(\widetilde{S}_{z_1, ..., z_k} \cap M)$ over the distribution of weights and biases. We can rewrite this expression as 
\begin{equation}\label{eq:breakdown}
    \int_{S_{z_1, ..., z_k} \cap M} \textbf{1}_{z_j \text{ is good at } x} d \vol_{m - k}(x). 
\end{equation}
The map $H_k$ is Lipschitz and $C^1$ almost everywhere. We first note the smooth coarea formula (theorem 5.3.9 by \citet{Krantz2008GeometricIT}) in context of our notation. Suppose $m \geq k$ and $H_k: \mathbb R^{\nin} \to \mathbb R^k$ is $C^1$ and $M \subseteq \mathbb R^{\nin}$ is an $m-$dimensional $C^1$ manifold in $\mathbb R^{\nin}$, then
\begin{align}\label{eq:smoothcoarea}
    \int_M g(x)& J^{M}_{k, H_k}(x) d \vol_{m}(x) = \int_{\mathbb R^k} \int_{M \cap H_k^{-1}(y)} g(y) d \vol_{m -k}(y) d \vol_{k}(x),
\end{align}
for every $\mathcal H^m$-measurable function $g$ where $J^{M}_{k, H_k}$ is as defined in Definition 3.1.

We denote preactivations and biases of neurons as $ \textbf{z}(x) = [z_1(x), ..., z_k(x)]^T $ and $ \textbf{b}_{\textbf{z}} = [b_{z_1}, ..., b_{z_k}]^T $. From the notation in A1, we have that
\begin{equation*}
    \rho_{\textbf{b}_{\textbf{z}}} = \rho_{b_{z_1}, ..., b_{z_k}},
\end{equation*}
is the joint conditional density of $b_{z_1}, ..., b_{z_k}$ given all other weights and biases. The mean of the term in equation \ref{eq:breakdown} over the conditional distribution of $b_{z_1}, ..., b_{z_k}$, $\rho_{\textbf{b}_{\textbf{z}}}$, is therefore
\begin{equation}\label{eq:breakdist}
    \int_{\mathbb R^k} \textbf{b} d \vol_k(\textbf{b}) \int_{\{\textbf{z} = \textbf{b}\} \cap M} \textbf{1}_{z_j \text{ is good at } x} d \vol_{m - k}(x),
\end{equation}
where we denote $[b_1, ..., b_k]^T$ as $\textbf{b}$. Thus applying the smooth co-area formula (Equation \ref{eq:smoothcoarea}) to the expression in \ref{eq:breakdist} shows that the average \ref{eq:breakdown} is equal to
\begin{equation*}
    \int_{M} Y_{z_{1}, ..., z_{k}}(x) dx.
\end{equation*}
Finally, we take the average over the remaining weights and biases and commute the expectation with the $dx$ integral. We can do this since the integrand is non-negative. This gives us the result:
\begin{equation}
    \mathbb E\Big[ \vol_{m - k}\Big ( \widetilde{S}_{z_1, ..., z_k} \cap M\Big ) \Big]  = \int_M \mathbb E \Big [ Y_{z_{1}, ..., z_{k}}(x) \Big ] dx,
\end{equation}
as required.
\end{proof}
Finally, taking the summation over all possible sets of distinct neurons $z_1, ... ,z_k$ and combining equation \ref{eq:summation} with Proposition \ref{prop:jacobsum} completes the proof for Theorem 3.2.

\section{Proof of Theorem 3.3} \label{app:proofthm2}
To prove the upper bound in Theorem 3.3 we first show that the (determinant of) Jacobian for the function $H_k:M \to \mathbb R^k$, $H_k(x) = [z_1(x), ..., z_k(x)]^T$ , as defined in 3.1 is equal to the volume of the parallelopiped defined by the vectors $\phi_{H_k}(\nabla z_j(x))$, for $j = 1,...,k$, where $\phi_{H_k}:\mathbb R^k \to T_x M$ is an orthogonal projection onto the orthogonal complement of the kernel of the differential $D_M H_k$. 
Intuitively, this shows that with the added assumption $x \in M$ in Theorem 3.3 how exactly we can incorporate the geometry of the data manifold $M$ into the upper bound provided by \citet{Hanin2019ComplexityOL} in corollary 7.

\begin{proposition}\label{prop:linearmap}
Given $H_k:M \to \mathbb R^k$ such that $H_k(x) = [z_1(x), ..., z_k(x)]^T$ and the differential $D_M H_k$ is surjective at $x$ then
\begin{equation}\label{prop:lineartransform}
    J_{k, H_k}^M(x) = \sqrt{\det(\text{Gram}(\phi_{H_k}(\nabla z_1(x)), ..., \phi_{H_k}(\nabla z_k(x))))},
\end{equation}
where $\phi_{H_k}: \mathbb R^n \to \mathbb R^k$ is a linear map and \textit{Gram} denotes the Gramian matrix.
\end{proposition}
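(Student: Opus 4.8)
The plan is to reduce the supremum in Definition~\ref{def:jacob} to a finite-dimensional linear-algebra problem for the single linear map $L := D_M H_k \colon T_x M \to \mathbb{R}^k$, since the Jacobian depends only on this differential at $x$, and then to identify the extremal value with the Gram determinant of the projected gradients. First I would observe that the differential acts coordinate-wise: for any tangent vector $v \in T_x M$, the $j$-th coordinate of $Lv$ is the directional derivative $dz_j(v) = \langle \nabla z_j(x), v\rangle$, and because $v \in T_x M$ this equals $\langle P_{T_x M}\nabla z_j(x), v\rangle$, where $P_{T_x M}$ is the orthogonal projection of $\mathbb{R}^{\nin}$ onto the tangent space. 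Fixing an orthonormal basis of $T_x M$, the map $L$ is then represented by the $k \times m$ matrix whose $j$-th row lists the coordinates of $P_{T_x M}\nabla z_j(x)$.

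Next I would evaluate the supremum. For a $k$-dimensional parallelepiped $P \subset T_x M$ with edge vectors collected as the columns of a matrix $W$, one has $\mathcal{H}^k(P) = \sqrt{\det(W^\top W)}$ and $\mathcal{H}^k(L(P)) = \sqrt{\det(W^\top L^\top L\, W)}$, so the squared distortion ratio is the generalized Rayleigh quotient $\det(W^\top L^\top L\, W)/\det(W^\top W)$. By Ky Fan's maximum principle (a Courant--Fischer argument) this quotient is maximized over full-rank choices of $W$ by the span of the top $k$ eigenvectors of $L^\top L$, with maximal value the product of the $k$ largest eigenvalues of $L^\top L$, i.e. $\prod_{i=1}^k \sigma_i^2 = \det(L L^\top)$, where $\sigma_i$ are the singular values of $L$. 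Surjectivity of $D_M H_k$ guarantees all $\sigma_i$ are positive, so the extremal parallelepiped is nondegenerate and spans $(\ker L)^\perp$; hence $J^M_{k,H_k}(x) = \sqrt{\det(L L^\top)}$.

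Finally I would unwind $\det(L L^\top)$ into the stated Gram determinant. Using the row description above, the $(i,j)$ entry of $L L^\top$ is $\langle P_{T_x M}\nabla z_i(x), P_{T_x M}\nabla z_j(x)\rangle$, so $L L^\top = \mathrm{Gram}(P_{T_x M}\nabla z_1(x), \dots, P_{T_x M}\nabla z_k(x))$. It remains to match $P_{T_x M}\nabla z_j$ with $\phi_{H_k}(\nabla z_j)$: I would show that $P_{T_x M}\nabla z_j(x)$ already lies in $(\ker D_M H_k)^\perp$, since for any $v \in \ker D_M H_k \subset T_x M$ we have $\langle P_{T_x M}\nabla z_j, v\rangle = \langle \nabla z_j, v\rangle = (Lv)_j = 0$. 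As the ambient-normal component of $\nabla z_j$ is annihilated by the projection onto this subspace of $T_x M$, it follows that $\phi_{H_k}(\nabla z_j) = P_{T_x M}\nabla z_j$, and substituting yields the claimed identity.

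The main obstacle I anticipate is the bookkeeping between the two projections: verifying that the orthogonal projection onto $(\ker D_M H_k)^\perp$ taken inside $T_x M$ genuinely agrees with the tangential projection $P_{T_x M}$ when applied to the ambient gradients $\nabla z_j$, so the normal components drop out and the two Gram matrices coincide. A secondary point requiring care is justifying that the supremum in Definition~\ref{def:jacob} is attained and equals the product of singular values rather than merely bounded by it, which the Rayleigh-quotient computation settles once surjectivity rules out degeneracy.
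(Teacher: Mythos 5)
Your proof is correct, and it rests on the same decomposition as the paper's: both represent the differential $L = D_M H_k$ at $x$ by the $k \times m$ matrix whose rows are the coordinates of the tangentially projected gradients $P_{T_xM}\nabla z_j(x)$ in an orthonormal basis of $T_xM$, and both conclude that the Jacobian is the square root of the Gram determinant of those projections. Where you genuinely differ is in how the supremum in Definition 3.1 is evaluated. The paper outsources this step to Lemma 5.3.5 of its geometric-integration-theory reference (the supremum is attained on parallelepipeds contained in $(\ker D_M H_k)^{\perp}$, and the value is then computed ``arguing similarly'' to that lemma's proof). You instead prove it from scratch: reduce to the generalized Rayleigh quotient $\det(W^{\top}L^{\top}LW)/\det(W^{\top}W)$, apply the Ky Fan / Cauchy-interlacing bound to get the product of the top $k$ eigenvalues of $L^{\top}L$, and identify that product with $\det(LL^{\top})$, using surjectivity to ensure the extremum is attained by a nondegenerate parallelepiped spanning $(\ker L)^{\perp}$. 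Your route buys two things the paper glosses over: an explicit justification that the supremum is attained and positive (this is exactly where surjectivity enters), and the bookkeeping identity $\phi_{H_k}(\nabla z_j) = P_{T_xM}\nabla z_j$ --- i.e., that each tangentially projected gradient already lies in $(\ker D_M H_k)^{\perp}$, so the further projection is the identity on these vectors --- which the paper dispatches in a single unproved sentence about directions of steepest ascent. The paper's route is shorter at the cost of leaning on the cited lemma. A minor bonus of your version: writing the answer as $\sqrt{\det(LL^{\top})}$ sidesteps the paper's slip $\sqrt{\det(A^{\top}A)}$, which would vanish whenever $k < m$ since $A^{\top}A$ is an $m \times m$ matrix of rank $k$; the intended quantity is $\det(AA^{\top})$, which is precisely what you compute.
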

\begin{proof}
We first define the orthogonal complement of the kernel of the differential $D_M H_k$.
For a manifold $M \subset \mathbb R^n$ and a fixed point $x$ we have that $T_x M$ is a $m-$dimensional hyperplane.
If we choose an orthonormal basis $e_1, ..., e_n$ of $\mathbb R^n$ such that $e_1, ..., e_m$ spans $T_x M$ for a fixed $x$ we can denote all vectors in $T_x M$ using $m$ coordinates corresponding to this basis.
Therefore, for any vector $y \in \mathbb R^k$ we can get the orthogonal projection of $y$ onto $T_x M$ using a $m \times n$ matrix which we denote as $P_x$, where $P_x y$ (matrix multiplied by a vector) represents a vector in $T_x M$ corresponding to the basis $e_1, ..., e_m$.
For any manifold $M$ in $R^n$ and function $H_k: M \to \mathbb R^k$ we have that $D_M H_k: T_x M \to \mathbb R^k$ at a fixed point $x$ is linear function. 
Therefore we can write $D_M H_k(v) = A v$ where $v \in T_x M$ is denoted using the aforementioned basis of $T_x M$. This implies that $A$ is a $k \times m$ matrix.
Therefore, the kernel of $D_M H_k$ for a fixed point $x \in M$ is
\begin{equation*}
    \ker(D_M H_k) = \Big \{ z | A z = 0 \text{ and } z \in T_x M \Big \}.
\end{equation*}
Since we can create a canonical basis for the space $\ker(D_M H_k)$ starting from the basis $e_1, ..., e_m$ in $R^n$ using the Gram-Schmidt process given the matrix $A$ we have that for any $y \in R^n$ we can project it orthogonally onto $\ker(D_M H_k)$.
The orthogonal complement of $\ker(D_M H_k)$ is therefore defined by
\begin{align*}
    \ker(D_M H_k)^{\perp} = \Big \{& a | a \cdot z = 0 \text{ for all } z \in \ker(D_M H_k) \text{ and } a \in T_x M \Big \}.
\end{align*}
Similar to the previous argument, we construct a canonical basis starting from $e_1, ..., e_m$ for $\ker(D_M H_k)^{\perp}$ and therefore we can denote the orthogonal projection onto $\ker(D_M H_k)^{\perp}$ as a linear transformation.
We denote this linear projection for fixed $x$ using $\phi_k$.

We denote the basis vectors $e_1, ...., e_m$ as a $m \times n$ matrix $ E$ where each row $i$ corresponds to the vector $e_i$.
Therefore, the orthogonal projection of any vector $y \in \mathbb R^{n}$ is $E y$. 
Now we can get the matrix $A$ using $E \nabla z_j(x)$ corresponding to each row $j$ for $j = 1, ..., m$.
This uses the fact that the direction of steepest ascent on $z_j(x)$ restricted to the tangent space $T_x M$ of the manifold $M$ is an orthogonal projection of the direction of steepest ascent in $\mathbb R^n$.

Finally, from lemma 5.3.5 by \citet{guilleminpollock1974} we have that
\begin{equation*}
    J_{k, H_k}^M(x) = \mathcal H^k(D_M H_k (P))/ \mathcal H^k(P),
\end{equation*}
for any parallelepiped $P$ contained in $(\ker(D_M H_k))^{\perp}$.
Arguing similar to the proof of lemma 5.3.5 by \citet{guilleminpollock1974} we get that
\begin{align*}
    J_{k, H_k}^M(x) =& \sqrt{\det((A)^T A)} = \sqrt{\det{\text{Gram}(E \nabla z_1 (x), ..., E \nabla z_k(x))}},
\end{align*}
thereby showing that $\phi_{H_k}(y) = Ey$ is a linear mapping.
\end{proof}

Although we state Proposition \ref{prop:linearmap} for neurons $z_j(x), j = 1, ...,k$  in the proof, it applies to any function that satisfy the conditions laid out in the proposition. Equipped with Proposition \ref{prop:linearmap} we prove Theorem 3.3.
When the weights and biases of $F$ are independent obtain an upper bound on $\rho_{b_{z_1}, ..., b_{z_k}}(b_1, ..., b_k)$ as
\begin{equation*}
    \Pi_{j=1}^{k} \rho_{b_{z_j}}(b_1, ..., b_k) \leq \Big ( \sup_{\text{neurons }z} \rho_{b_z}(b) \Big )^k = C_{\text{bias}}^k.
\end{equation*}

Hence,
\begin{equation*}
    Y_{z_1, ..., z_k} \leq C_{\text{bias}}^k J_{k, H_k}^M.
\end{equation*}
From Proposition \ref{prop:lineartransform} we have that $J_{k, H_k}^M$ is equal to the $k$-dimensional volume of the paralellopiped spanned by $\phi_x(\nabla z_j(x))$ for $j = 1, ..., k$. Therefore, we have
\begin{equation}\label{eq:jacobprojec}
    J_{k, H_k}^M \leq \Pi_{j = 1}^k ||E \nabla z_j(x)|| \leq ||E||^ k \Pi_{j = 1}^k ||\nabla z_j(x)||,
\end{equation}
where $||E||$ denotes the matrix norm which is defined as
\begin{equation*}
    ||E|| = \sup \Big \{||Ey|| \Big | y \in \mathbb R^k, ||y|| = 1 \Big \}.
\end{equation*}
Note that $E$ does not depend on $F$ (or $z_1, ..., z_k$) but only on $T_x M$ or more generally the geometry of $M$ at any point $x$.
From Theorem 3.2 by \citet{Hanin2018ProductsOM} we have, for any fixed $x$,
\begin{equation}\label{eq:nica}
    \mathbb E \Big [\Pi_{j = 1}^k ||\nabla z_j(x)|| \Big ] \leq \Big ( C_{\text{grad}} \Big )^k,
\end{equation}

where,
\begin{equation*}
    C_{\text{grad}} = \sup_z \sup_{x \in \mathbb R^{\nin}} \mathbb E [|| \nabla z(x) ||^{2k} ]^{1/k} \leq C e^{C \sum_{j = 1}^d \frac{1}{n_j}},
\end{equation*}
wherein $C > 0$ depends only on $\mu$ and not on the architecture of $F$ and $n_j$ is the width of the hidden layer $j$.
Let $C_M$ be defined as
\begin{align*}
    C_{M} \defeq \sup \Big \{& C | \text{ there exists a set, S, of non zero } m-k \text{-dimensional Hausdorff measure}\\ 
    &\text{ such that } ||E_x|| \geq C \forall x \in S  \Big \}
\end{align*}
Therefore, combining equations \ref{eq:nica}, \ref{eq:jacobprojec} and result from Theorem 3.2 we have
\begin{align*}
    &\frac{\mathbb E[\vol_{m - k}(\calB_{F, k} \cap M)]}{\vol_{m}(M)} \leq \begin{pmatrix} \text{number of neurons} \\ k \end{pmatrix} (2C_{\text{grad}} C_{\text{bias}} C_{M})^k,
\end{align*}
where the expectation is over the distribution of weights and biases.

\begin{figure}
    \centering
    \subfigure[]{\includegraphics[width=0.45\textwidth]{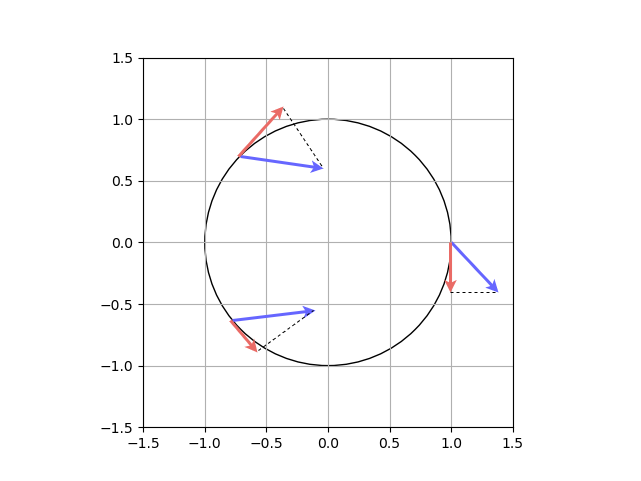}}
    \subfigure[]{\includegraphics[width=0.45\textwidth]{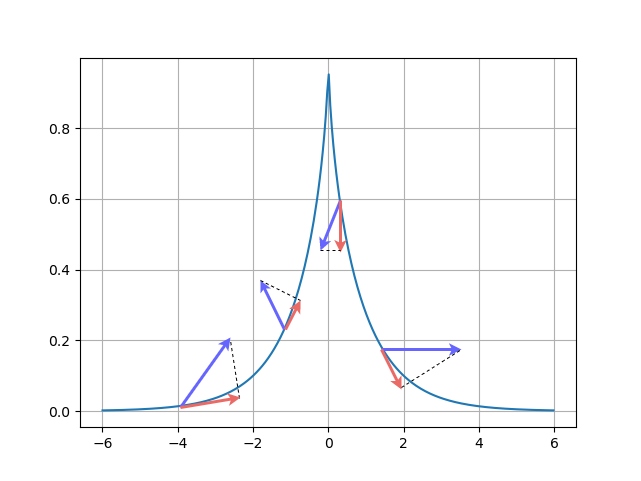}}
    \caption{We illustrate how vectors project differently on tangent planes of two different manifolds: circle (a) and tractrix (b). In case of the tractrix the tangents (and the projection of vectors onto them) are on the inside of the tractrix whereas for the sphere the tangents are always on the outside of the sphere. Since the projections of vectors onto the tangent space are an essential aspect of our proof we end up with the term $C_M$, which quantifies the ``shrinking'' of these vectors upon projection, in the inequalities for Theorems 3.3 and 3.4.}
    \label{fig:projection}
\end{figure}

\section{Proof of Theorem 3.4} \label{app:proof:thm3}

We first prove the following proposition

\begin{proposition} \label{prop:volume}
For a compact $m$-dimensional submanifold $M$ in $\mathbb R^n$, $m, n \geq 1$ and $m < n$ let $S \subseteq \mathbb R^n$ be a compact  fixed continuous piece-wise linear submanifold with finitely many pieces and given any $U > 0$. Let $S_0 = \emptyset$ and let $S_k$ be the union of the interiors of all $k$-dimensional pieces of $S \setminus (S_0 \cup ... \cup S_{k - 1})$. Denote by $T_{\epsilon}$ the $\epsilon$-tubuluar neighbourhood of any $X \subset M$ such that
\begin{equation*}
    T_{\epsilon}(X) = \Big \{ y | d_M(y, X) < \epsilon \text{ and } y \in M\Big \},
\end{equation*}
where $\epsilon \in (0, U)$, $d_M$ is the geodesic distance between the point $y$ and set $X$ on the manifold $M$, we have
\begin{equation*}
    \vol_{m}(T_{\epsilon}(S)) \leq \sum_{k = n - m}^{d} \vol_k(S_k \cap M) \omega_{n - k} \epsilon^{n - k} C_{k, \kappa, U}, 
\end{equation*}
where $C_{k, \kappa, U} > 0$ is a constant that depends on the average scalar curvature $\kappa_{(S_k \cap M)^\perp}$ and $U$, and $\omega_{n - k}$ is the volume of the unit ball in $\mathbb R^{n - k}$.
\end{proposition}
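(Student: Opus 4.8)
The plan is to reduce the bound to a piece-by-piece tubular estimate and then invoke a Weyl-type tube formula adapted to the Riemannian geometry of $M$. First I would decompose $S \cap M$ according to the stratification $S = S_0 \cup S_1 \cup \cdots$: since the finitely many relatively open pieces $S_k$ partition $S$ up to a set of strictly lower dimension, the geodesic tube obeys $T_\epsilon(S) \subseteq \bigcup_k T_\epsilon(S_k \cap M)$, so subadditivity of $\vol_m$ gives $\vol_m(T_\epsilon(S)) \le \sum_k \vol_m(T_\epsilon(S_k \cap M))$. It therefore suffices to bound each term separately and sum.

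Second, I would pin down the dimension of each stratum. Because the weights and biases are generic (Assumptions A1--A2), each flat piece $S_k$ meets $M$ transversally, so by the transversality dimension count already established in Proposition \ref{prop:manifolddims} the intersection $S_k \cap M$ is a submanifold of $M$ of dimension $k + m - n$. In particular $S_k \cap M$ is empty with probability one whenever $k + m - n < 0$, i.e.\ whenever $k < n - m$; this is exactly what forces the summation to begin at $k = n - m$. The normal directions to $S_k \cap M$ \emph{inside} $M$ then span dimension $m - (k+m-n) = n-k$, which is the source of the factors $\omega_{n-k}$ and $\epsilon^{n-k}$, and it identifies the base measure as the $(k+m-n)$-dimensional volume of $S_k \cap M$.

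Third, and this is the analytic heart, I would estimate $\vol_m(T_\epsilon(P))$ for a compact piece $P = S_k \cap M$ of dimension $k+m-n$ inside the $m$-manifold $M$. The idea is to parametrize the tube by the normal exponential map $\exp^\perp : \nu(P) \to M$ and upper-bound $\vol_m(T_\epsilon(P))$ by the integral of the Jacobian of $\exp^\perp$ over the radius-$\epsilon$ normal disk bundle; since only an upper bound is needed, self-intersections of the tube beyond the cut locus only lower the true volume and are harmless. Fibrewise this integral factors as the $(k+m-n)$-dimensional volume of $P$ times an integral over the $(n-k)$-dimensional normal disk of radius $\epsilon$, whose Jacobian is controlled by Jacobi-field (Rauch/Bishop) comparison in terms of the curvature of $M$ in the directions normal to $P$. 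Because $\epsilon < U$ is capped, the resulting polynomial-in-$\epsilon$ correction series can be majorized by $\omega_{n-k}\epsilon^{n-k}$ times a single constant $C_{k,\kappa,U}$ depending only on $U$ and on the average scalar curvature $\kappa_{(S_k \cap M)^\perp}$ in the normal directions. Summing over $k$ then produces the claimed inequality.

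The main obstacle will be making Step 3 rigorous and uniform in $\epsilon$: one must bound the Jacobian of the normal exponential map out to radius $U$ by curvature comparison, absorbing every $\epsilon$-power correction into the one constant $C_{k,\kappa,U}$; one must handle the cut locus and potential self-overlap of the tube (benign for an upper bound but requiring an explicit integral-geometric argument); and one must treat the boundary of each piece $P = S_k \cap M$, whose tubular contribution is of strictly lower order and can be folded back into the stratification. Keeping the comparison geometry under uniform control as $\epsilon$ ranges over $(0,U)$ is precisely where the curvature hypotheses (and, downstream in Theorem \ref{thm:dist}, the finite-diameter and polynomial-growth assumptions A4--A5) will be needed.
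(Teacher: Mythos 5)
Your proposal is correct and follows essentially the same route as the paper's proof: stratify $S$, use transversality (Proposition \ref{prop:manifolddims}) to get $\dim(S_k\cap M)=k+m-n$ and that strata with $k<n-m$ contribute nothing, contain the tube of each stratum in the image of its radius-$\epsilon$ normal disk bundle while pushing boundary/end-cap points into tubes of lower strata (your ``fold back into the stratification'' is exactly the paper's recursive inequality \ref{eq:ineqvol}), and bound the normal-fiber volume by a curvature-dependent constant times $\epsilon^{n-k}$. The only substantive difference is the tool for that fiber estimate: the paper factorizes the normal-bundle volume and cites the Gray--Wan scalar-curvature expansion of geodesic-ball volumes, invoking assumption A5 already at this step (not only downstream in Theorem \ref{thm:dist}) to obtain the uniform bound $\vol_{n-k}(B_\epsilon)\le C_{k,\kappa,U}\,\epsilon^{n-k}$ on $(0,U)$, whereas you propose Rauch/Bishop comparison on the Jacobian of the normal exponential map---an equivalent route that, if anything, makes explicit the factorization step the paper leaves unproved.
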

\begin{proof}
Define $d$ to be the maximal dimension of linear pieces in $S$. Let $x \in T_{\epsilon}(X \cap M)$. Suppose $x \notin T_{\epsilon}(X \cap M)$ for all $k = n - m, ..., d -1$. Then the intersection of a geodesic ball of radius $\epsilon$ around $s$ with $S$ is a ball inside $S_d \cap M$. Using the convexity of this ball, with respect to the manifold $M$ \citep{Robbin2011INTRODUCTIONTD}, there exists a point $y$ in $S_d \cap M$ such that the geodesic $\gamma: [0, 1] \to M$ with $\gamma(0) = y$ and $\gamma(1) = x$ is perpendicular to $S_d \cap M$ at $y$. Formally, $T_{S_d \cap M} M$ at $y$ is perpendicular to $\dot{\gamma(0)} \in T_M$ at $y$. Let $B_{\epsilon}(N^*(S_d \cap M))$ be the union of all the $\epsilon$ balls along the fiber of the submanifold  $S_d \cap M$. Therefore, we have
\begin{align}\label{eq:ineqvol}
    \vol_{m}&(T_{\epsilon}(S \cap M) \leq \vol_m(B_{\epsilon}(N^*(S_d \cap M)) + \vol_{m}(T_{\epsilon}(S_{\leq d -1} \cap M)),
\end{align} 
where $S_{\leq d - 1} \defeq \cup_{k = 0}^{d - 1} S_k$. We also note that
\begin{align*}
    \vol_m & (B_{\epsilon}(N^*(S_d \cap M)) = \vol_{m + d - n}(S_d \cap M) \vol_{n - d}(B_{\epsilon}((M \cap S_d)^{\perp})),
\end{align*}
where $B_{\epsilon}((M \cap S_d)^{\perp})$ is the average volume of an $\epsilon$ ball in the submanifold of $M$ orthogonal to $M \cap S_d$. This volume depends on the average scalar curvature, $\kappa_{(M \cap S_d)^{\perp}}$ of the submanifold $(M \cap S_d)^{\perp}$. As shown by \citet{Wan2016GEOMETRICIO}, for a fixed point $x \in (M \cap S_d)^{\perp}$
\begin{align*}
    \vol_{n - d}&(B_{\epsilon}(x, (M \cap S_d)^{\perp})) = \omega_{n - d} \epsilon^{n - d}\Big (1 - \frac{ \kappa(x)_{(M \cap S_d)^{\perp}}}{n - d + 2}\epsilon^2 + O(\epsilon^4) \Big ),
\end{align*}
where $\omega_{n - d}$ is the volume of the unit ball of dimension $n - d$, $B_{\epsilon}(x, (M \cap S_d)^{\perp})$ is the geodesic ball of radius $\epsilon$ in the manifold $(M \cap S_d)^{\perp}$ centered at $x$ and $\kappa_{(M \cap S_d)^{\perp}}(x)$ denotes the scalar curvature at point $x$.
\citet{Gray1974TheVO} provides the second order expansion of the formula above.
Given that $\epsilon \in (0, U)$, for all $k \in \{n - m, n- m + 1, ..., d \}$, then we have a smallest $C_{k, \kappa, U}$ such that
\begin{equation}\label{eq:volbound}
    \vol_{k}(B_{\epsilon}(x, (M \cap S_k)^{\perp})) \leq C_{k , \kappa, U} \epsilon^{k}.
\end{equation}
The above inequality follows from assumption A5.
Using the above inequalities \ref{eq:ineqvol}, \ref{eq:volbound} and repeating the argument $d - 1 -n + m$ times we get the result of the proposition.
\end{proof}

We also note that $C_{k, \kappa, U}$ increases monotonically with $U$, this also follows from the volume being monotonically increasing and positive for $\epsilon > 0$.
Finally, we can now prove Theorem 3.4. Let $x \in M$ be uniformly chosen. Then, for all $\epsilon \in (0, U)$, using Markov's inequality and Proposition \ref{prop:volume}, we have
\begin{align*}
    \mathbb E[\distfun_M(x, B_f \cap M)]  &\geq \epsilon \Pr(\distfun_M(x, B_F \cap M) > \epsilon) \\
    &= \epsilon (1 - \Pr(\distfun_M(x, B_F \cap M) <= \epsilon))\\
    &\geq \epsilon (1 - \sum_{k = \nin - m}^{\nin} \vol_k(S_k \cap M) \omega_{n - k} \epsilon^{\nin - k} C_{\nin - k , \kappa, U} \Big )\\
    & \geq \epsilon (1 - \sum_{k = \nin - m}^{\nin} C_{\nin - k , \kappa, U} (C_{\text{grad}} C_{\text{bias}} C_{M} \epsilon \{\# \text{neurons} \})^k \Big ).
\end{align*}
Note that as we increase $U$ the constants $C_{n - k , \kappa, U}$ increase, although not strictly, for all $k$. 
To find the supremum of the expression on the right hand side, of the last inequality, in $\epsilon \in (0, U)$ we multiply and divide the expression by $C_{\text{grad}} C_{\text{bias}} C_{M} \# \text{neurons} $ to get the polynomial
\begin{equation*}
    p_U(\zeta) = \zeta \Big (1 - \sum_{k = \nin - m}^{\nin} C_{\nin - k , \kappa, U} \zeta^k \Big) ,
\end{equation*}
where $\zeta = \epsilon C_{\text{grad}} C_{\text{bias}} C_{M} \# \text{neurons}$ and $\zeta \in (0, U')$ where $U' = U C_{\text{grad}} C_{\text{bias}} C_{M} \# \text{neurons}$.
Let $d_M$ be the diameter of the manifold $M$, defined by $d_M = \sup_{x, y \in M} \distfun_M(x, y)$.
We assume that $d_M$ is finite. 
Taking the supremum over all $U \in (0, d_M]$ or $U' \in (0, d_M']$, where $d_M' = d_M C_{\text{grad}} C_{\text{bias}} C_{M} \# \text{neurons}$, gives us the constant $C_{M, \kappa}$
\begin{equation*}
    C_{M, \kappa} = \sup_{U' \in (0, d_M']} \{ \sup_{\zeta \in (0, U')} \{ p_U(\zeta) \} \}.
\end{equation*}

Since $d_M$ is finite the constant above exists and is finite.
We make a note on the existence of this constant $C_{M, \kappa}$ in the absence of the constraint that the diameter of manifold $M$ is finite.
As $U$ increases the constants $C_{\nin - k, \kappa, U}$ also increase and are all positive.
The solution for $p_U'(\zeta) = 0, \zeta > 0$, which we denote by $\zeta_U$, is unique and keeps decreasing as $U$ increases.
The uniqueness of the solution follows from the fact that the coefficients $C_{\nin - k, \kappa, U}$ are all positive.
We also note that $p_U(\zeta_U)$ need not be equal to $\sup_{\zeta \in (0, U')} \{ p_U(\zeta) \}$ because $\zeta_U$ need not lie in $(0, U')$.
In all such cases $\sup_{\zeta \in (0, U')} \{ p_U(\zeta) \} = p_U(U')$.
Given the polynomial $p_U (\zeta)$ above if we can assert that there exists a $C_U$, and the corresponding $C_{U'}$, such that for all $U > C_U$, and corresponding $U' > C_{U'}$, we have $\sup_{\zeta \in (0, U')} \{ p_U(\zeta) \} = p_U (\zeta_U) < \infty$ and for all $0 < U \leq C_U$ we have $\sup_{\zeta \in (0, U')} \{ p_U(\zeta) \} = p_U (U') < \infty$.
Therefore, $C_{M, \kappa}$ exists and is finite if the previous assertion holds, proving this assertion is beyond the scope of our current work and particularly challenging.

Finally, taking the average over distribution of weights gives us the inequality
\begin{align*}
    \mathbb E[&\distfun_M(x, B_f \cap M)]  \geq \frac{C_{M, \kappa}}{C_{\text{grad}} C_{\text{bias}} C_{M} \# \text{neurons}},
\end{align*}
where $C_{M, \kappa}$ is a constant which depends on the average scalar curvature of the manifold $M$.
This completes the proof of Theorem 3.4.

\subsection{Variations in Supremum}
\label{subsec:varyninm}

\begin{figure*}
\begin{multicols}{2}
    \centering
    \includegraphics[width=.45\textwidth]{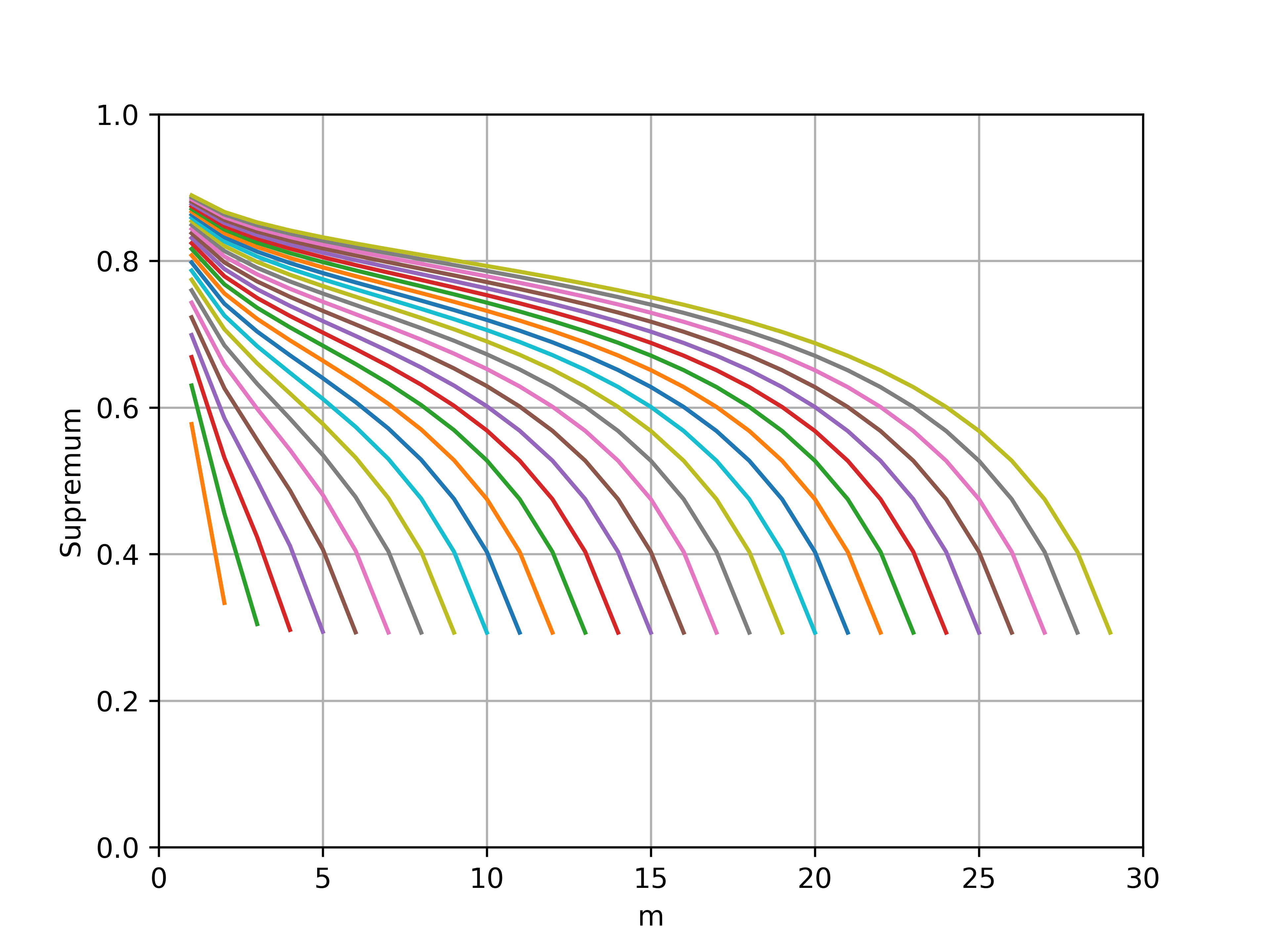}
    \caption{We plot the optima for a simplified polynomial as described in Section \ref{subsec:varyninm}. The individual plots correspond to $\nin$ increasing from $\nin = 2$ to $\nin = 30$ (left to right) with $m$ varying from $1$ to $\nin -1$ on the x-axis.}
    \label{fig:varynm}
    \includegraphics[width=.45\textwidth]{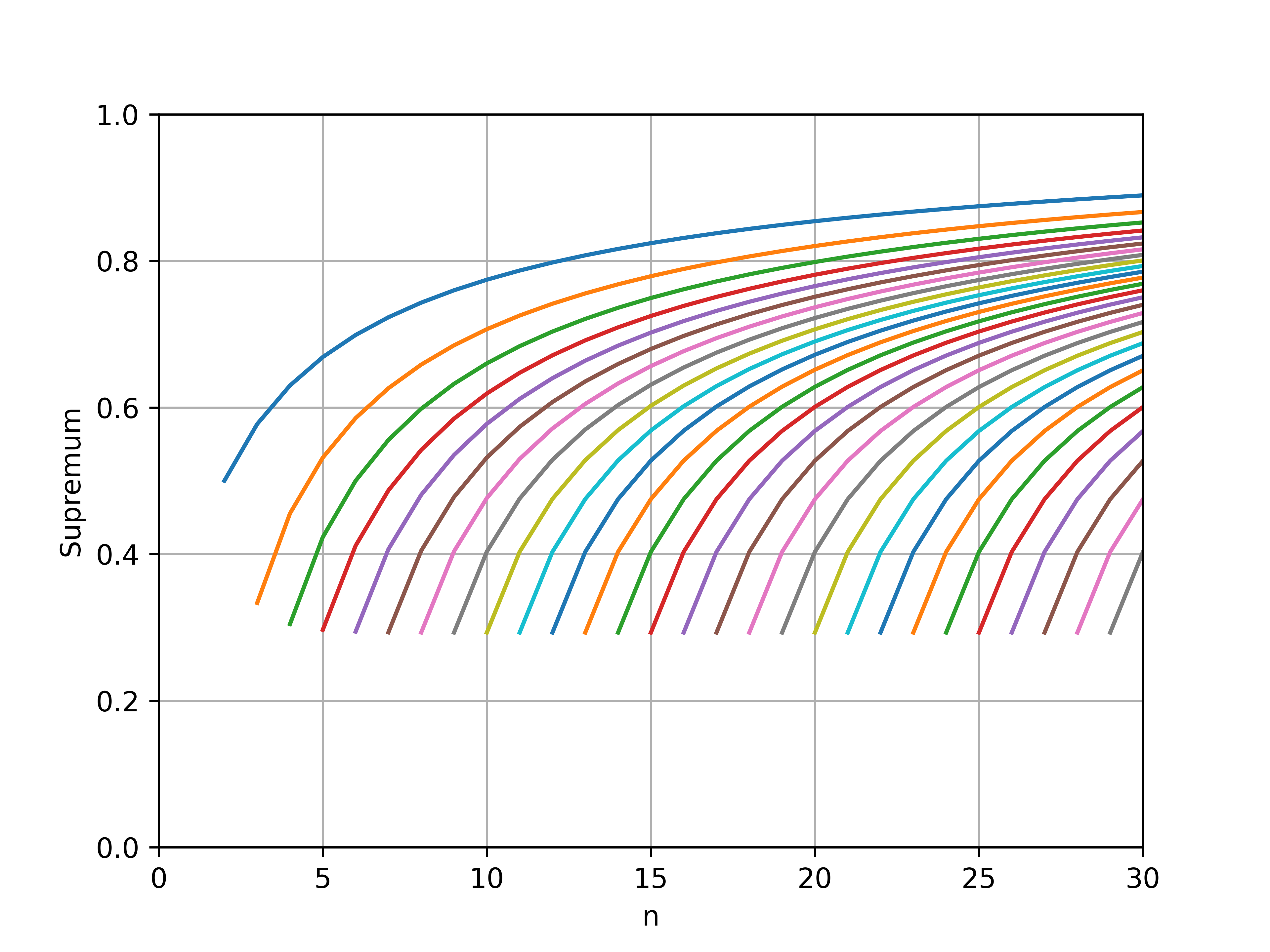}
    \caption{We plot the optima for a simplified polynomial as described in Section \ref{subsec:varyninm}. The individual plots correspond to $m$ increasing from $m = 1$ to $m = 29$ (left to right) with $\nin$ varying from $m + 1$ to $30$ on the x-axis.}
    \label{fig:vary_basem}
\end{multicols}
\end{figure*}

We illustrate the dependence of the the constant $C_{M, \kappa}$ on varying values of $\nin, m$ using a simple example.
We fix the coefficient of the polynomial $p(\zeta)$ to be all 1, this not always the case but we do so to illustrate the relationship between the optima and the exponents for simplest such polynomial:
\begin{equation*}
    p_{\text{simplified}}(\zeta) = \zeta \Big (1 - \sum_{k = \nin -m}^{\nin} \zeta^{k} \Big)
\end{equation*}
We plot the supremums of this simplified polcynomial $C_{\text{simplified}} = \sup_{\zeta \in (0, 1)} p_{\text{simplified}}(\zeta)$ for each $\nin$ from the $\{2, ..., 30\}$ and varying $m$ in Figure \ref{fig:varynm}.
Similarly, we vary $\nin$ with fixed $m$ and report the supremums $C_{\text{simplified}}$ in Figure \ref{fig:vary_basem}.
We notice that for a fixed $\nin$ the supremum decreases with $m$ and for a fixed $m$ the supremum increases with $\nin$.

We programatically calculate the supremum being reported by restricting the domain of  $p_{\text{simplified}}$ to $(0, 1)$.
We solve for the supremum by using the $\texttt{fminbound}$ method from the $\texttt{scipy}$ package \citep{2020SciPy-NMeth}.
The function uses Brent's method \citep{Brent1971AnAW} to find the supremum.

\section{Toy Supervised Learning Problems} \label{app:toyprob}

\begin{figure}
    \centering
    \includegraphics[width=0.55\textwidth]{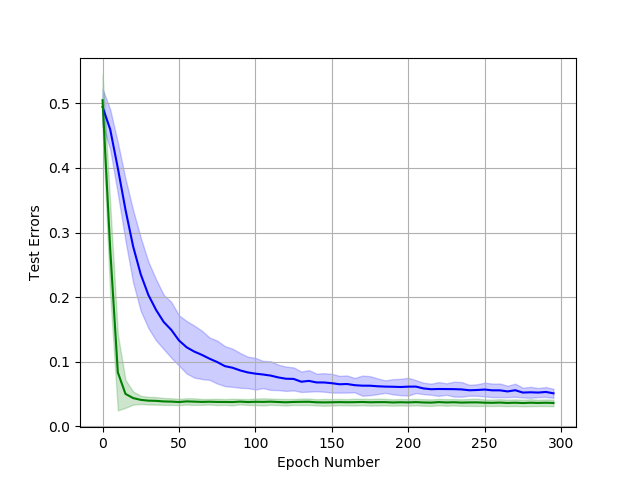}
    \caption{The test errors for the cases where data is sampled from the tractrix (blue) and the circle (green). We see that the tractrix converges slower but the magnitude of the errors remains comparable as training progresses across the two manifolds.}
    \label{fig:test_errs}
\end{figure}
For the two supervised learning tasks with different geometries (tractrix and sphere), we uniformly sample 1000 data points from each 1D manifold to come up with samples of $(x_i, y_i)$ pairs. We then add Gaussian noise to $y$.
We train a DNN with 2 hidden layers, with 10 and 16 neurons in each layer and a single linear output neuron, for a total of 26 neurons with piece-wise linearity, using the PyTorch library.
The optimization is performed using the Adam optimizer \citep{Kingma2015AdamAM} with a learning rate of 0.01.
We ensure a reasonable fit of the model by reducing the test time mean squared error (see Figure \ref{fig:test_errs}).
We then calculate the exact number of linear regions on the respective domains by finding the points where $z(x(t)) = b_z$ for every neuron $z$ and $x$ is on the 1D manifold.
We do this by adding neurons, $z$, one by one at every layer and using the SLSQP \citep{Kraft} to solve for $|z(x(t)) - b_z| = 0$ in $t$ for tractrix and $|z(x(\theta)) - b_z| = 0$ in $\theta$ for the circle.
Note that this methodology can be extended to solve for linear regions of a deep ReLU network for any 1D curve $x(.)$ in any dimension.
We then split a linear region depending on where this solution lies compared to previous layers.
For every epoch, we then uniformly randomly sample points from the 1D manifold, by sampling directly from $\theta$ and $t$, to measure average distance to the nearest linear boundaries.
The experiment was run on CPUs, from training to counting of number of linear regions.
The intel cpus had access to 4 GB memory per core.
A total of, approximately, 24 cpu hours were required for all the experiments in this section.
This was run on an on demand cloud instance.
All implementations are in PyTorch, except for SLQSP for which we used sklearn.
\subsection{Varying Input Dimensions}

The experimental setup, hyperparameters, network architecture, target function and methods are all the same as described for the toy supervised learning problem for the case where the geometry is a sphere.
The only difference is that the input dimension varies, $\nin$.

\section{High Dimensional Dataset} \label{app:hdd}

We utilise the official implementation of pretrained StyleGAN generator to generate curves of images that lie on the manifold of face images.
Specifically, for each curve we sample a random pair of latent vectors: $z_1, z_{2} \in \mathbb R^k$, this gives us the start and end point of the curve using the generator $g(z_1)$ and $g(z_2)$.
We then generate 100 images to approximate a curve connecting the two images on the image manifold in a piece-wise manner.
We do so by taking 100 points on the line connecting $z_1$ and $z_2$ in the latent space that are evenly spaced and generate an image from each one of them.
Therefore, the $i^{\text{th}}$ image is generated as: $x_i = g(((100 - i) \times z_1 + i \times z_2)/100)$, using the StyleGAN generator $g$.
We qualitatively verify the images to ensure that they lie on the manifold of images of faces.
4 examples of these curves, sampled as above, are illustrated in the video here: https://drive.google.com/file/d/1p9B8ATVQGQYoiMh3Q22D-jSaI0USsoNx/view?usp=sharing.

These two constructions allow us to formulate two curves in the high-dimensional setting.
The straight line, with two fixed points $g(z_1)$ and $g(z_2)$, is defined as $x(t) = (1 - t) g(z_1) + t g(z_2)$ with $t \in [0, 1]$.
The approximated curve on the manifold is defined as $x'(t) = (1- t) g(z_i) + t g(z_{i + 1})$ where $i = \texttt{floor}(100t)$.
This once again gives us two curves and we solve for the zeros of $|z(x(t)) - b_z| = 0$ and $|z(x'(t)) - b_z| = 0$ for $t \in [0, 1]$ using SLQSP as described in Appendix \ref{app:toyprob}.

The neural network, used for classification in our MetFaces experiment, is feed forward with ReLU activation. 
There are two hidden layers with 256 and 64 neurons in the first and second layers respectively.
We downsample the images to $128\times 128 \times 3$.
We augment the dataset using random horizontal flips of the images.
All inputs are normalized.
We use a batch size of 32.
The neural network is trained using SGD. 
The learning rate is 0.01 and the momentum is 0.5.
The total time required, for these experiments on MetFaces dataset, was approximately 36 GPU hours on a Titan RTX GPU that has 24 GB memory.
This was run on an on demand cloud instance.
We chose hyperparameters by trial and error, targeting a better fit for the training data for the results reported in Figure 9 of the main body of the paper.

We report further results for density of linear regions with varying hyperparameters in Figure \ref{fig:new_density}.
We also report the training and testing accuracy for the various sets of hyperparameters in Figure \ref{fig:acc_new}.
Note that Figure \ref{fig:acc_new}(a) corresponds to the test and train accuracies on MetFaces reported in the main body of the paper (Figure 9). 
Note all of these results are for the same architecture as described above.

\begin{figure}
    \centering
    \subfigure[LR: 0.025, momentum: 0.5, BS: 64]{\includegraphics[width=0.42\textwidth]{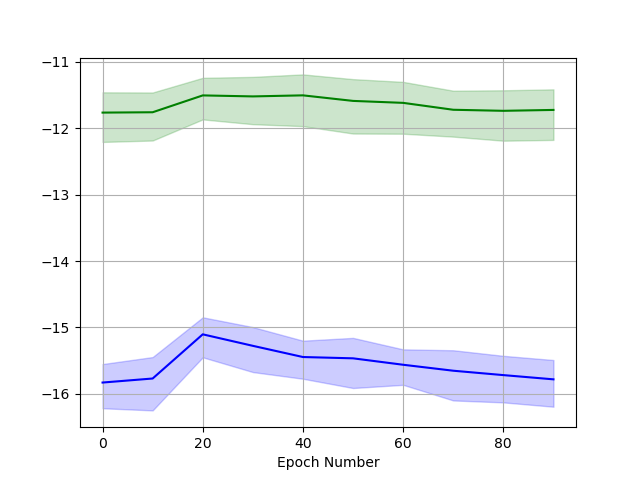}}
    \subfigure[LR: 0.005, momentum: 0.75, BS: 64]{\includegraphics[width=0.42\textwidth]{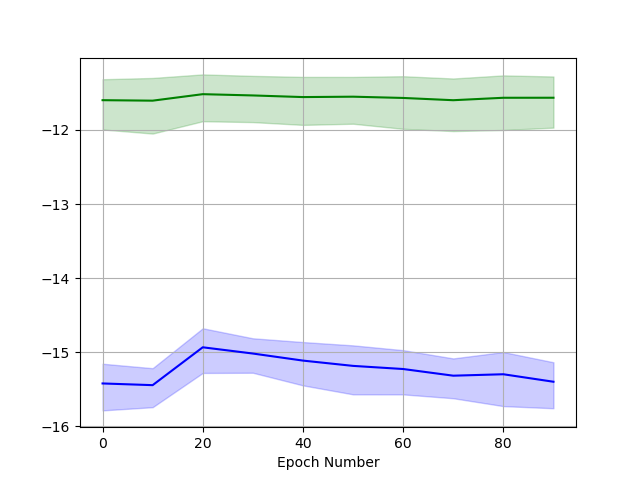}}
    
    \subfigure[LR: 0.01, momentum: 0.75, BS: 128]{\includegraphics[width=0.42\textwidth]{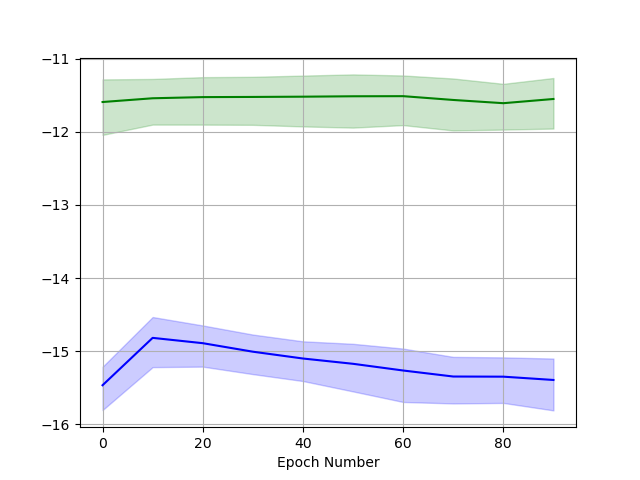}}
    \caption{We report the log density of linear regions for various hyperparameters. Lr refers to the learning rate and BS is the batch size.}
    \label{fig:new_density}
\end{figure}

\begin{figure}
    \centering
    \subfigure[LR: 0.01, momentum: 0.5, BS: 32]{\includegraphics[width=0.42\textwidth]{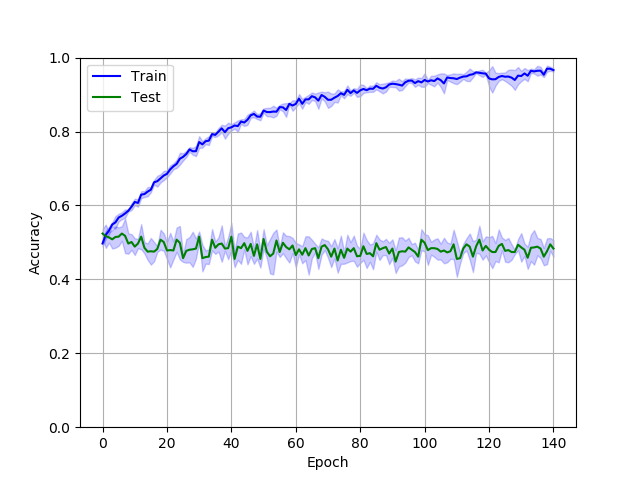}}
    \subfigure[LR: 0.025, momentum: 0.5, BS: 64]{\includegraphics[width=0.42\textwidth]{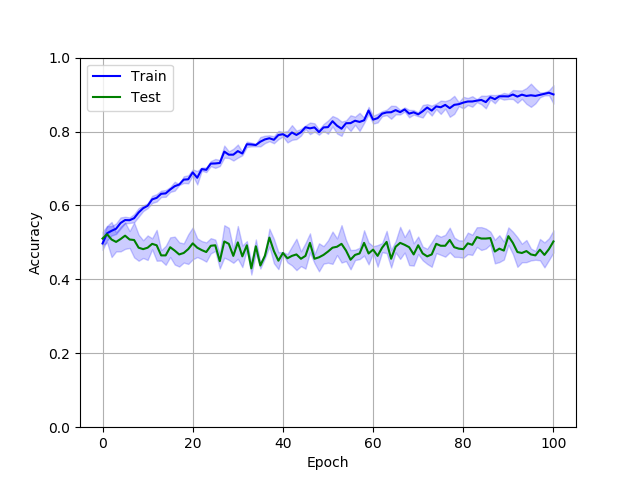}}
    
    \subfigure[LR: 0.005, momentum: 0.75, BS: 64]{\includegraphics[width=0.42\textwidth]{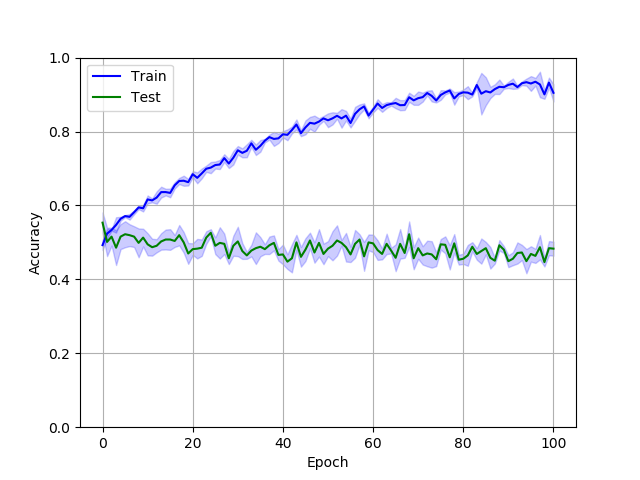}}
    \subfigure[LR: 0.01, momentum: 0.75, BS: 128]{\includegraphics[width=0.42\textwidth]{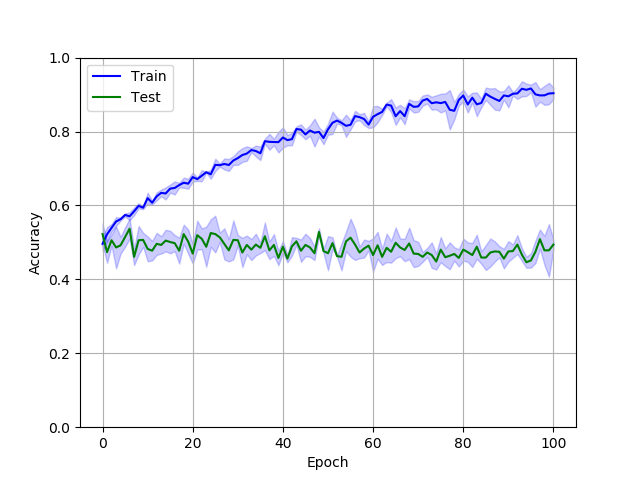}}
    \caption{We report the test and train accuracies across 5 random seeds above.}
    \label{fig:acc_new}
\end{figure}

\end{document}